\documentclass{article}

\usepackage{microtype}
\usepackage{graphicx}
\usepackage{subcaption}
\usepackage{caption}
\usepackage{booktabs} 
\usepackage{paralist}
\usepackage{makecell}
\usepackage{hyperref}

\usepackage[preprint]{icml2026}

\usepackage{amsmath}
\usepackage{amssymb}
\usepackage{mathtools}
\usepackage{amsthm}

\usepackage[capitalize,noabbrev]{cleveref}

\theoremstyle{plain}
\newtheorem{theorem}{Theorem}[section]

\newtheorem{lemma}[theorem]{Lemma}

\theoremstyle{definition}

\theoremstyle{remark}

\usepackage{colortbl}
\definecolor{tabfirst}{rgb}{1, 0.7, 0.7} 
\definecolor{tabsecond}{rgb}{1, 0.85, 0.7} 
\definecolor{tabthird}{rgb}{1, 1, 0.7} 
\usepackage[textsize=tiny]{todonotes}

\icmltitlerunning{}

\begin{document}

\twocolumn[
  \icmltitle{Recurrent Equivariant Constraint Modulation: Learning Per-Layer Symmetry Relaxation from Data}



  \icmlsetsymbol{equal}{*}

  \begin{icmlauthorlist}
    \icmlauthor{Stefanos Pertigkiozoglou}{xxx}
    \icmlauthor{Mircea Petrache}{yyy}
    \icmlauthor{Shubhendu Trivedi}{zzz}
    \icmlauthor{Kostas Daniilidis}{xxx}
  \end{icmlauthorlist}

  \icmlaffiliation{xxx}{University of Pennsylvania}
  \icmlaffiliation{yyy}{Pontificia Universidad Cat\'olica de Chile}
  \icmlaffiliation{zzz}{Fermi National Accelerator Laboratory; Now at Google DeepMind}

  \icmlcorrespondingauthor{Stefanos Pertigkiozoglou, Shubhendu Trivedi}{pstefano@seas.upenn.edu, shubhendu@csail.mit.edu}

  \icmlkeywords{Machine Learning, ICML}

  \vskip 0.3in
]



\printAffiliationsAndNotice{}  

\begin{abstract}
Equivariant neural networks exploit underlying task symmetries to improve generalization, but strict equivariance constraints can induce more complex optimization dynamics that can hinder learning. Prior work addresses these limitations by relaxing strict equivariance during training, but typically relies on prespecified, explicit, or implicit target levels of relaxation for each network layer, which are task-dependent and costly to tune. We propose Recurrent Equivariant Constraint Modulation (RECM), a layer-wise constraint modulation mechanism that learns appropriate relaxation levels solely from the training signal and the symmetry properties of each layer's input-target distribution, without requiring any prior knowledge about the task-dependent target relaxation level. We demonstrate that under the proposed RECM update, the relaxation level of each layer provably converges 
to a value upper-bounded by its symmetry gap, namely the degree to which its input-target distribution deviates from exact symmetry. Consequently, layers processing symmetric distributions recover full equivariance, while those with approximate symmetries retain sufficient flexibility to learn non-symmetric solutions when warranted by the data. Empirically, RECM outperforms prior methods across diverse exact and approximate equivariant tasks, including the challenging molecular conformer generation on the GEOM-Drugs dataset.
\end{abstract}

\section{Introduction}\label{sec:intro}
Equivariant neural networks have emerged as a key paradigm for incorporating known task symmetries into machine learning models. By constraining network layers to respect the underlying task symmetry, these architectures achieve improved generalization and robustness across a large range of domains \cite{Cohen16GroupEqui,pmlr-v80-kondor18a, Bekkers20, bronstein2021geometric}. Despite their successes, a growing body of work observes that in certain tasks, even when the underlying symmetries are present, replacing equivariant models with their unconstrained counterparts results in more stable training and improved performance \citep{wang2024swallowing,abramson2024alphafold3}. Recent works conjecture that this phenomenon arises because imposed symmetry constraints in the parameter space may induce a more complex optimization landscape \cite{flinth2023optimization}, restricting available optimization trajectories and potentially trapping them in suboptimal regions of the parameter space \cite{xie2025a}. 

These observations have motivated efforts to relax equivariance constraints during training while imposing them back on the model at test time.  \citet{PennPaper} and \citet{manolache2025learning} propose using approximate equivariant networks, originally developed for tasks with misspecified or approximate symmetries, as a way to improve training dynamics even when exact symmetries hold. Specifically, they demonstrate that relaxing the equivariant constraints during training and reimposing them at test can improve performance while retaining the parameter and sample efficiency of equivariant architectures. 

While these solutions have been shown to generalize across various tasks and equivariant architectures, they lack a principled way to choose \emph{where} to relax the equivariance constraint and by \emph{how much}. This limitation makes their applicability to a new model nontrivial. The most effective choice of modules used for relaxing the equivariant constraint can differ between models and tasks.  The optimal combination and the level of relaxation depends on both the architecture and the specific symmetries of the task, with no known principled way to choose a priori. The aforementioned methods partially address this challenge by providing additional knowledge about the level of relaxation the individual layers should achieve by the end of training. This information is encoded either through explicitly designed relaxation schedules \citep{PennPaper} or through additive penalty terms whose weighting implicitly determines the target level of relaxation \citep{manolache2025learning}. This requirement is further complicated by the fact that optimal relaxation levels, as well as the most effective combination of modules used to relax equivariance, can differ across layers of the same model. Without additional prior knowledge, discovering the appropriate level of relaxation of each layer requires extensive hyperparameter tuning, making the approach computationally expensive and difficult to scale.
 
In this work, we address this limitation by designing a training framework that performs per-layer modulation of different  relaxation techniques using only the symmetries present in the task supervision, without requiring prior knowledge of the specific level of relaxation required for the task. Specifically, we propose a relaxation layer and an update rule with the following properties:  
\begin{enumerate}
    \item Each layer is described as a linear combination of an equivariant component and weighted unconstrained components. Relaxation can be applied to any subset of network components, and their weights are recurrently updated during training using a learned update rule. 
    \item The update rule guarantees that the weights of each unconstrained component converge to a value upper bounded by the distance between the learned data distribution and its symmetrized counterpart. Thus, layers with fully symmetric input-target distributions converge to be equivariant without requiring any additional hyperparameter tuning, while layers with non-symmetric  distributions retain the flexibility to relax equivariant constraints and learn approximate equivariant functions. \emph{Here, the input-target distribution refers to the joint distribution over a layer's input and the model's ground truth target output used as supervision}.
\end{enumerate}
This convergence guarantee is a key contribution distinguishing our approach from prior work: rather than requiring practitioners to specify a relaxation scheduler or additional optimization terms, our framework automatically discovers the appropriate level of equivariance for each layer based on the symmetry structure of its input-target data. This enables our method to modulate equivariant constraints across tasks with both approximate and exact symmetries, and architectures with different optimal relaxation strategies.

\section{Related Work}
\begin{figure*}
    \centering
\includegraphics[width=0.97\linewidth]{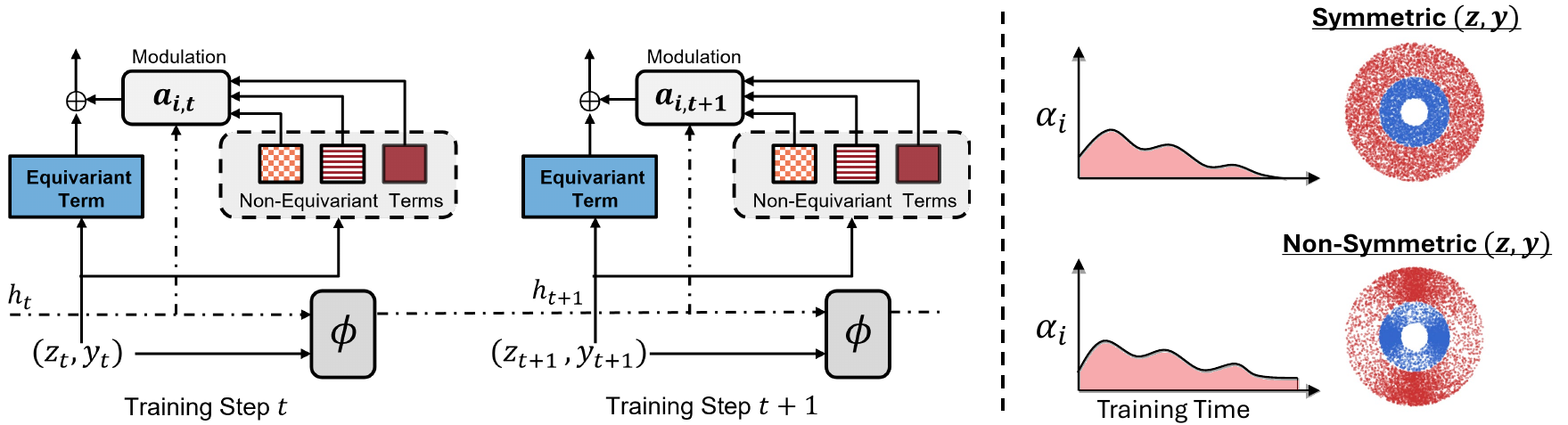}
    \caption{[Left] In our proposed Recurrent Equivariant Constraint Modulation (RECM) framework, the equivariant layer is relaxed by the addition of non-equivariant terms modulated by $\alpha_{i,t}$. These modulation parameters are controlled by an optimization state vector $h_t$, which is recurrently updated at each training step through the update rule described in Equation \ref{eq:update} (visualized as $\phi$ here).[Right] RECM guarantees that in cases of fully symmetric input-target distributions the non-equivariant modulation parameters $\alpha_i$ converge to zero, recovering fully equivariant layers, while in non-symmetric cases the network has flexibility to learn approximate equivariant solutions.  }\label{fig:full_diagr}
\end{figure*}
The principle of incorporating symmetry directly into NN architectures has a long and heterogeneous intellectual lineage. Early instances include the work of \citet{fukushima1979self, fukushima1980neocognitron} inspired by neurophysiological studies of visual receptive fields \citep{hubel1962receptive, hubel1977ferrier}, culminating in the CNN architectures formalized by \citet{lecun1989backpropagation}. In parallel, \emph{Perceptrons} \citep{MinskyPapert} articulated a more general program for NNs grounded in group invariance theorems, a line of inquiry that was actively pursued through the mid-1990s \citep{51951, soton250470, 248459, 374187}. This program was revived with a generalized CNN-based and representation-theoretic perspective in \citet{Cohen16GroupEqui}, with subsequent extensions to continuous and higher-dimensional symmetry groups \citep{weiler20183d,WeilerHS18}. A unified prescriptive mathematical theory of such networks was developed by \citet{pmlr-v80-kondor18a, cohen2019general, weiler2024equivariant}. These ideas have been applied to different domains \citep{Esteves_2018_ECCV, Mar+2019c}, and have been successful in a wide range of applications including 3D vision \citep{deng2021vn, chatzipantazis2023mathrmseequivariant}, molecular modeling \citep{Jum+2021,hoogeboom2022equivariantdiffusionmoleculegeneration,batzner20223}, language modeling \citep{petrache2024position,Gordon2020PermutationEM}, and robotics \citep{ZhuWangGrasp2022,ordonez2024morphological}. 

Despite their extensive successes, a major limitation of equivariant neural networks is the assumption of exact distributional symmetries. As argued theoretically by \citet{petrache2023approximationgeneralization}, misspecifying the level of symmetry in a task can degrade generalization. One response has been the design of approximate equivariant architectures that interpolate between strict equivariance and fully unconstrained networks. \citet{ResidPathwayPriors} proposed adding an unconstrained component parallel to the exact equivariant layers, while \citet{approxequiv} relaxed weight sharing in group convolutional or steerable networks by allowing small perturbations between otherwise shared parameters.  \citet{romero2022learning} introduced methods for learning partial equivariance, and \citet{gruver2023the} developed approaches for measuring learned equivariance. More recently, \citet{veefkind2024probabilistic} proposed projection back to the equivariant parameter space to control relaxation, while \citet{berndt2026approximate} used a similar projection as an equivariance-promoting regularizer. \citet{ashman2024approximately} designed an architecture-agnostic approximate equivariant framework applied to neural processes.

While the above approximate equivariant models improve performance in tasks lacking exact symmetries, they do not explain recent empirical observations showing that unconstrained models can outperform equivariant models even when exact symmetries are present \citep{wang2024swallowing,abramson2024alphafold3}.  \citet{flinth2023optimization} contrasted optimization trajectories of equivariant and augmentation-based approaches, highlighting possible limitations of exactly equivariant neural networks, while \citet{xie2025a} showed that equivariance can obscure parameter-space symmetries and fragment the optimization landscape into disconnected basins. \citet{elesedy2021provably} sketched a projected gradient method for constructing equivariant networks, suggesting that relaxation during optimization could be beneficial, but without empirical validation. 

More recently, \citet{PennPaper} aimed to address some of  these limitations by proposing a scheduling framework to modulate equivariant constraints during training with projection back to the equivariant parameter space at test time. Building on this work, \citet{manolache2025learning} formulated the problem as a constrained optimization, enabling adaptive constraint modulation via dual optimization without ad hoc scheduling. While this approach learns constraint modulation dynamically, it still requires an implicitly predefined target level of equivariance for each layer. Our proposed framework builds on these works and aims at addressing one of their central limitations: the need for prior knowledge of the desired equivariance level in the final trained model.

\section{Preliminaries}\label{sec:prel}
Before presenting our proposed method, it is useful to clearly define the equivariant constraints imposed on each individual layer and their relaxation. Given a group $G$ we can define its action on a vector space $V$ through a linear representation i. e. a map $\rho: G\to GL(V)$ which satisfies the group homomorphism property $\rho(g_1)\rho(g_2)=\rho(g_1g_2)$ for any two elements $g_1,g_2\in G$. This linear representation allows us to map each element $g\in G$ to an invertible linear map $\rho(g)\in GL(V)$ that acts on a vector $v\in V$. A layer of a neural network $f:\mathbb{R}^n\to \mathbb{R}^k$ is  equivariant to a group $G$ acting to its input and output by representations $\rho_\mathrm{in}(g), \rho_\mathrm{out}(g)$ if for every $v\in \mathbb{R}^n$:
\begin{align*}
    f^\mathrm{eq}(\rho_\mathrm{in}(g)v)=\rho_\mathrm{out}(g)f^\mathrm{eq}(v),\quad \forall g\in G
\end{align*}
When $f^\mathrm{eq}$ is a linear map parametrized by a matrix $W_\mathrm{eq}$, it must satisfy the constraint $W_\mathrm{eq}\rho_\mathrm{in}(g)=\rho_\mathrm{out}(g)W_\mathrm{eq}$ for all $g\in G$. An extensive body of work now exists to solve this constraint, characterizing the space of equivariant matrices (or intertwiners) for various groups $G$ and input-output representations pairs \citep{weiler20183d, Thomas2018TensorFN,deng2021vn,Mar+2019c}. Following \citet{ResidPathwayPriors}, we can relax the equivariant constraint of this linear layer by creating a convex combination of an equivariant linear layer and an unconstrained affine layer $f(x)=\beta W_\mathrm{eq}x+\alpha_1W_\mathrm{un}x+\alpha_2b$. 
\section{Method}\label{sec:recm}
While the relaxation presented in Section \ref{sec:prel} can be effective when we have prior knowledge of the correct weighted combination of equivariant and non-equivariant terms for each layer, recovering an effective combination by only using the task supervision can be non-trivial. To address this, we first propose to consider a weighted sum of multiple ``candidate" non-equivariant terms, and design an update rule that recovers their modulation weights such that they optimize for the task performance while respecting its underlying symmetries. Specifically, we define each layer as:
\begin{align*}
f_{\theta,\alpha, \beta}=\beta f_{\theta_0}^{\mathrm{eq}}+\sum_{i=1}^K \alpha_if_{\theta_i}^{\mathrm{un}_i}
\end{align*}
and recurrently update $\beta$, $\alpha_i$ during training until convergence to the weighted combination that is used during inference. This is an extension of the setting used in \citet{PennPaper} and \citet{manolache2025learning}, where a single unconstrained term was considered per layer.
 
In order to recover  $\alpha_i,\beta$,  \citet{PennPaper} set $\beta=1$ and use a linear scheduler for $\alpha_i$ that starts from $0$, is linearly increased to $1$, and then set to 0  at the end of training. On the other hand, \citet{manolache2025learning} formulate the problem as a constrained optimization: $\alpha_i$ is optimized along with the model's parameters by solving the dual problem. In both cases, the users must know apriori the useful level of symmetry constraints and impose them by either designing a scheduler for $\alpha_i$ or by formulating the appropriate constrained optimization problem. As discussed in Section \ref{sec:intro}, while the general symmetry of a task's data distribution is often accessible, its interaction with intermediate model features is complex---governed by model architecture, task specifics, and optimization dynamics rather than by simple known rules. This motivates our approach: allowing the model to learn per-layer requisite symmetries while enforcing a constraint where the model's equivariance scales with the symmetry of the input-target  distribution. More concretely, as discussed in Section \ref{sec:intro}, we require the parameter $\alpha_i$ of each layer to converge to a value whose modulus is upper bounded by some measure of the invariance gap of its input-target distribution, and it satisfies the following:
\begin{compactenum}
    \item In cases of fully invariant input-target distributions, $\alpha_i$ should converge to zero, and thus we recover an equivariant layer.
    \item For non-invariant distributions the network should be free to learn unconstrained non-invariant solutions. This ensures we do not over-constrain the network  when the data lacks the desired symmetry.
\end{compactenum}
\subsection{Recurrent Equivariant Constraint Modulation}\label{sec:MainLayer}
To formalize both requirements, we consider a model $f$ composed of $N$ learnable layers $f^{(j)}$ and $N$ optional standard parameter-free layers $\sigma^{(j)}$ (e.g. non-learnable nonlinearities, skip connections, or identity layers):
\begin{align*}
    f=\sigma^{(N)}\circ f^{(N)}\circ \sigma^{(N-1)}\circ f^{(N-1)}\circ\ldots \sigma^{(1)}\circ f^{(1)}
\end{align*}
We denote the intermediate input representations $z_t^{(j)}$ at layer $j$ at optimization step $t$, defined recursively as:
\begin{align*}
    z_t^{(1)}&:=x_t,\\
    z_t^{(j+1)}&:=\sigma^{(j)}(f^{(j)}(z_t^{(j)})), \quad j=1,\ldots,N
\end{align*}
Additionally, we can  associate the intermediate representation $z_t^{(j)}$ with the ground truth target output $y_t$ that is used in the loss $L(f(x_t),y_t)$ at optimization step $t$. As a result for each layer $j$ at optimization step $t$ we create pair $(z_t^{(j)},y_t)$ .

The goal of our proposed method is to simultaneously learn the parameters of each learnable component $f^{(j)}$ along with their appropriate level of constraint modulation. To achieve this, we parametrize each layer $f^{(j)}$ to use dynamic and layer specific modulation parameters $\beta_t^{(j)}$, $\alpha_{i,t}^{(j)}$. Since we aim for a unified treatment of all layers we design a parametrization and an update rule that is applied independently to each $f^{(j)}$ using their input-target pairs $(z_t^{(j)},y_t)$ at each optimization step $t$. The parametrization and update rule take identical forms across layers (differing only in their layer-specific state variables and learnable parameters); thus, to simplify notation, we present them without the superscripts $(j)$ for the remainder of this work. First, each layer is parametrized as:
\begin{align*}  f_{\theta,\alpha_t,\beta_t}(z_t):=\beta_tf_{\theta_{\mathrm{eq}}}^{\mathrm{eq}}(z_t)+\sum_{i=1}^K \alpha_{i,t}f_{\theta_{\mathrm{un},i}}^{\mathrm{un},i}(z_t)
\end{align*}

with the modulation parameters $\alpha_{i,t},\beta_t$ being controlled by an optimization state variable $h_t$ and are updated as follows:
\begin{align}
    h_t= \left(1-\frac{a}{b+at}\right)h_{t-1}+\frac{a}{b+at}l_{\theta_t}(z_{t-1},y_{t-1})\label{eq:update}\\
    \alpha_{i,t}=s(w_{\alpha_i}^Th_t),\quad \beta_t=k(w_{\beta}^Th_t),\label{eq:nonlin}
\end{align}
where $h_t$ is a optimization state vector (separate for each layer), $s,k$ are point-wise non-linear functions with $k(0)=1, s(0)=0$, $l_{\theta_t}:\mathbb{R}^{n_\mathrm{in}}\times \mathbb{R}^{n_\mathrm{out}}\to \mathbb{R}^m$ is a learnable update rule, $w_{\alpha_i},w_{\beta}\in \mathbb{R}^{m}$ are learnable vectors with bounded norm $\lVert w_{\alpha_i}\rVert\leq 1$, $\lVert w_{\beta}\rVert\leq 1$, and $a,b$ are scalars that control the decay speed of the exponential weighted average. 

As optimization progresses, each layer updates all the parameters of equivariant layers, unconstrained layers, and the update rule ($l_\theta,w_{\alpha_i},\beta$) simultaneously through gradient descent, while the update of the optimization state $h_t$ is performed by Equation \ref{eq:update}. This formulation uses an exponential weighted average for the update of $h_t$ with time-varying weights that provide flexibility to the learning algorithm to dynamically change the equivariant constraint, while also allowing us to quantify and control the convergence of $h_t$. In practice, for an optimization with $T$ total iterations we can set $b=1$ and adjust $a$ accordingly so that at $T$ steps $ a/(b+aT)$ reaches a value close to zero. An important property of the above update rule is that for intermediate layers, distribution of the pairs $(z,y)$ changes dynamically since the input $z$ is a learnable feature output of a previous layer. Nevertheless, given a reasonable assumption about the convergence of learnable parameters, the following results provide guarantees about the convergence of $h_t$.
\begin{lemma}[Convergence of $h_t$]\label{convergence:lemma} \textbf{[Proof provided in Appendix \ref{proof:l1}]}
Assume at time $t$ we sample $(z_t,y_t)$ from a distribution with density $p_t$, where $p_t$ converges in $1$-Wasserstein distance to distribution $p$. Also assume that $l_{\theta_t}$ is bounded, Lipschitz continuous and converges uniformly to $l^*$. Then we have that
\begin{align*}
    h_t\underset{a.s.}{\to}\mathbb{E}_{(z,y)\sim p}[l^*(z,y)]=h^*.
\end{align*}

\end{lemma}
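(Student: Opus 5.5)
The plan is to read the recursion in Equation \ref{eq:update} as a stochastic approximation scheme with step sizes $\gamma_t := \frac{a}{b+at}$, and then split the driving term into a martingale noise part and a vanishing bias part. Since $\gamma_t \sim 1/t$, we have $\sum_t \gamma_t = \infty$ and $\sum_t \gamma_t^2 < \infty$. With $\gamma_t=a/(b+at)$ one has $\prod_{s=1}^t(1-\gamma_s) = \frac{b}{b+at}$, so unrolling the recursion gives
\begin{align*}
h_t = \frac{b}{b+at}h_0 + \frac{a}{b+at}\sum_{s=1}^{t} l_{\theta_s}(z_{s-1},y_{s-1}).
\end{align*}
Thus $h_t$ is, up to an $O(1/t)$ term, the Ces\`aro average of the observations. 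This reduces the lemma to showing that this average converges almost surely to $h^*$.

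The observation is decomposed as
\begin{align*}
l_{\theta_s}(z_{s-1},y_{s-1}) = \big(l_{\theta_s}-l^*\big)(z_{s-1},y_{s-1}) + \Big(l^*(z_{s-1},y_{s-1}) - \mathbb{E}_{p_{s-1}}[l^*]\Big) + \Big(\mathbb{E}_{p_{s-1}}[l^*] - h^*\Big) + h^*.
\end{align*}
\begin{compactenum}
\item The first term goes to zero deterministically by uniform convergence of $l_{\theta_s}$ to $l^*$, so its Ces\`aro average vanishes.
\item The third term goes to zero because $l^*$ is Lipschitz, being a uniform limit of functions with a common Lipschitz constant (I will assume the Lipschitz constants are uniform in $t$, or else take $l^*$ Lipschitz directly). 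By Kantorovich--Rubinstein duality, $|\mathbb{E}_{p_{s-1}}[l^*]-\mathbb{E}_p[l^*]| \le \mathrm{Lip}(l^*)\,W_1(p_{s-1},p) \to 0$, applied coordinatewise. Its Ces\`aro average therefore vanishes as well.
\item The second term is a martingale difference sequence $\xi_s$ with respect to the natural filtration, conditionally on the past with $(z_{s-1},y_{s-1})\sim p_{s-1}$. It is bounded because $l^*$ is bounded.
\end{compactenum}

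The main obstacle is the martingale term, because the sampling distribution may depend on the past through $\theta$. The cleanest route is to interpret the hypothesis as saying that the conditional law of $(z_s,y_s)$ given $\mathcal{F}_{s-1}$ is $p_s$, so that $\mathbb{E}[\xi_s\mid\mathcal{F}_{s-1}]=0$. Then $M_t=\sum_{s\le t}\xi_s/s$ is an $L^2$-bounded martingale, since $\sum_s \|\xi_s\|^2/s^2 \le C\sum_s 1/s^2<\infty$. Hence $M_t$ converges almost surely, and Kronecker's lemma gives $\frac1t\sum_{s\le t}\xi_s\to 0$ a.s. This is a strong law of large numbers for bounded martingale differences.

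A subtle point is that $l_{\theta_s}$ is evaluated at $(z_{s-1},y_{s-1})$ while $\theta_s$ may depend on that sample. This is why the uniform convergence is used to replace $l_{\theta_s}$ by the fixed function $l^*$ before forming the martingale. Combining the three pieces, the Ces\`aro average converges a.s. to $h^*$. Since $a/(b+at)\cdot t\to 1$ and $\frac{b}{b+at}h_0\to 0$, it follows that $h_t\to h^*$ almost surely.
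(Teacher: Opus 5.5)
Your proof is correct and has the same skeleton as the paper's. You telescope the recursion into $\frac{b}{b+at}h_0+\frac{at}{b+at}\bar Y_t$. You then split the average into three pieces: a uniform-approximation error bounded by $\|l_{\theta_k}-l^*\|_\infty$, a bias term controlled by Kantorovich--Rubinstein duality and $W_1(p_k,p)\to 0$, and a centered noise term. Replacing $l_{\theta_k}$ by $l^*$ before treating the noise is exactly what the paper does, for the reason you give.

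The genuine difference is how you handle the noise term. The paper's appendix restatement adds the hypothesis that the samples $(z_t,y_t)$ are independent. It then applies Kolmogorov's strong law for independent, non-identically distributed variables, using $\sum_k \mathrm{Var}(Z_k-\mu_k)/k^2<\infty$. You only require that the conditional law of each sample given the past is $p_s$, and you use an $L^2$-bounded martingale together with Kronecker's lemma.

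The underlying mechanism is the same, since Kolmogorov's criterion is itself proved by convergence of $\sum (Z_k-\mu_k)/k$ plus Kronecker. Your version still buys real generality when $p_s$ is allowed to be $\mathcal{F}_{s-1}$-measurable, i.e.\ to depend on the current parameters of earlier layers. That is exactly the situation the paper stresses for intermediate layers. There independence fails, but the martingale-difference structure survives, provided you read $W_1(p_s,p)\to 0$ and $l_{\theta_s}\to l^*$ uniformly as holding almost surely along paths. If instead $p_s$ is deterministic and the filtration is generated by the samples, your hypothesis is equivalent to the paper's independence assumption.

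You are also right that a uniform limit of Lipschitz functions need not be Lipschitz unless the constants are uniform. The paper simply assumes $l^*$ is Lipschitz with some constant $L$, so your caveat closes a small gap rather than opening one.
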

Here, it is important to note that the assumptions of convergence of both the  distribution $p_t$ and the update function $l_\theta$ can be easily satisfied by using a learning rate scheduler that has the learning rate converging to zero at the end of training. Since such learning rate schedulers are commonly used in practice, with the most popular example being the cosine annealing scheduler \cite{loshchilov2017sgdr}, the assumptions on convergence of Lemma \ref{convergence:lemma} can be easily satisfied in most training frameworks.
Figure \ref{fig:curves}, along with Appendix \ref{sec:appendix_modulation_evo}, showcases different cases of the convergence of parameters $\alpha_i$ for both symmetric and non-symmetric distributions.
Using the result of Lemma \ref{convergence:lemma}, we can control the convergence of $h_t$ by bounding the expectation of the limit state $h^*=\mathbb{E}_{(z,y)\sim p}[l^*(z,y)]$. The recurrent state update proposed in the Recurrent Equivariant Constraint Modulation (RECM) framework, along with its convergence properties, is illustrated in Figure \ref{fig:full_diagr}. In the next section, we present an efficient design of an update function $l_\theta$, such that $h^*$ satisfies the required properties presented at the beginning of this section.
\subsection{Design of update function $\l_\theta$}\label{sec:design}
For the limit state $h^*$, the required properties are:
\begin{itemize}
    \item The absolute value of each element of $h^*$ is upper bounded by the distance between the input-target  distribution $p$ (distribution at convergence) and its invariant projection $p_G:=\int_G p(\rho_\mathrm{in}(g)z,\rho_\mathrm{out}(g)y)dg$. This property guarantees that in the cases where $p$ is invariant $h^*=0$ and since $s(w_{\alpha,i}^T0)=0$ and $k(w_\beta^T0)=1$, we recover a fully equivariant layer.
    \item When $p$ is not invariant, the recurrent update is free to converge to $h^*$ with elements not equal to zero and thus the learning dynamics can converge to non-equivariant solutions. So when $p(z,y)\not=p(\rho_\mathrm{in}(g)z,\rho_\mathrm{out}(g)y)$ for some $g\in G$ and $(x,y)\in Z$, there exists $\theta^*$ such that $h^*=\mathbb{E}_{(z,y)\sim p}[l_{\theta^*}(z,y)]\not=0$.
\end{itemize}
Before defining the update function, we first need to define a generating set for the group of interest $G$. Specifically, if $G$ is a topological group we say that $C_G\subset G$ is a \textbf{topological generating set} (or simply generating set) of $G$ if the topological closure of $C_G^\star:=\{w=g_1g_2\dots g_n:\ n\in \mathbb N, g_1,\dots,g_n\in C_G\}$ is the whole $G$. A probability measure is said to be \textbf{adapted} if it has support equal to a generating set $C_G$. For example, if $G=SO(3)$, a topological generating set is $C_{SO(3)}=\{R_1, R_2\}$ where $R_1$ is the rotation by $\pi/2$ around the $x$-axis and $R_2$ is the rotation by $\pi/3$ around the $z$-axis (for the proof, see Appendix Lemma \ref{proof:exSO3}).

Given the above, for a given group $G$ and a topological generating set $C_G$ we can define the update function $l_\theta$ as:
\begin{align}\label{ltheta}
    l_\theta(z,y)=r_\theta(z,y)-\frac{1}{\left| C_G \right|} \sum_{g\in C_G} r_\theta(\rho_\mathrm{in}(g)z,\rho_\mathrm{out}(g)y),
\end{align}
with $r_\theta$ being parametrized by an MLP, or by any other parametrization that ensures Lipschitz continuity with a bounded constant. Since an MLP satisfies the Lipschitz assumption \cite{virmaux2018lipschitz} we can expect convergence of $h_t$ as shown in Lemma \ref{convergence:lemma}. 
\begin{table}[t]
    \centering
\begin{tabular}{lcccc}
    \toprule
    & \multicolumn{2}{c}{Rotated} & \multicolumn{2}{c}{Aligned} \\
    \cmidrule(lr){2-3} \cmidrule(lr){4-5}
    Model & Inst. & Cls. & Inst.  & Cls. \\
    \midrule
    VN-PointNet       & 0.68 & 0.62 & \cellcolor{tabthird}0.68 & \cellcolor{tabthird}0.62 \\
    VN-PN+ES.    & \cellcolor{tabthird}0.72    & \cellcolor{tabthird}0.67    & 0.67    & \cellcolor{tabthird}0.62    \\
    VN-PN+RPP         & \cellcolor{tabsecond}0.77  & \cellcolor{tabsecond}0.71  & \cellcolor{tabsecond}0.89 & \cellcolor{tabfirst}\textbf{0.86} \\
    VN-PN+RECM        & \cellcolor{tabfirst}\textbf{0.80} & \cellcolor{tabfirst}\textbf{0.74} & \cellcolor{tabfirst}\textbf{0.90} & \cellcolor{tabfirst}\textbf{0.86} \\
    \bottomrule
\end{tabular}
    \caption{Classification Accuracy on ModelNet40 \cite{shapenet2015} dataset using a baseline VN-PointNet network and different versions of constraint relaxation, including our proposed RECM}\label{tab:abl}
\end{table}
\begin{table*}[t]
\centering
\caption{Comparison of our proposed framework RECM with prior work on exact equivariant and approximate equivariant tasks. Mean squared error achieved by different equivariant models and method leveraging constraint relaxation applied on: (\textbf{left}) the equivariant task of N-body simulation \cite{pmlr-v80-kipf18a}, (\text{right}) the task of  motion capture trajectory prediction \cite{CMU2003}.  }\label{tab:both}
\begin{minipage}{0.4\textwidth}
    \centering
    \textbf{N-body Simulation}\\
    \vspace{0.5em}
\begin{tabular}{lc}
    \toprule
    Method & MSE $\downarrow$ \\
    \midrule
    SE(3)-Tr.  & 24.4 \\
    RF & 10.4 \\
    EGNN  & 7.1 \\
    EGNO  & 5.4 \\
    \midrule
    SEGNN  & $5.6_{\pm 0.25}$ \\
    SEGNN$_{\text{ES}}$  & \cellcolor{tabthird}$4.9_{\pm 0.18}$ \\
    SEGNN$_{\text{ACE-exact}}$ & \cellcolor{tabsecond}${3.8}_{\pm 0.16}$ \\
    SEGNN${_\text{ACE-appr}}$ & \cellcolor{tabsecond}${3.8}_{\pm 0.17}$ \\
    SEGNN$_{{\text{RECM}}}$ & \cellcolor{tabfirst}$\mathbf{3.7}_{\pm 0.22}$ \\
    \bottomrule
\end{tabular}
\end{minipage}%
\hfill
\begin{minipage}{0.48\textwidth}
    \centering
    \textbf{Motion Capture Trajectory Prediction}\\
    \vspace{0.5em}
  \begin{tabular}{lcc}
    \toprule
    Model & MSE$\downarrow$ (Run) & MSE$\downarrow$ (Walk) \\
    \midrule
    EF  & $521.3_{\pm 2.3}$ & $188.0_{\pm 1.9}$ \\
    TFN  & $56.6_{\pm 1.7}$ & $32.0_{\pm 1.8}$ \\
    SE(3)-Tr. & $61.2_{\pm 2.3}$ & $31.5_{\pm 2.1}$ \\
    EGNN  & $50.9_{\pm 0.9}$ & $28.7_{\pm 1.6}$ \\
    EGNO  & $33.9_{\pm 1.7}$ & $8.1_{\pm 1.6}$ \\
    \midrule        
    EGNO$_{\text{ES}}$ & $33.1_{\pm 1.2}$ & $7.8_{\pm 0.3}$ \\
    EGNO$_{\text{ACE}-\text{exact}}$ & \cellcolor{tabthird}$32.6_{\pm 1.6}$ & \cellcolor{tabthird}$7.5_{\pm 0.3}$ \\
    EGNO$_{\text{ACE}-\text{appr}}$ & \cellcolor{tabsecond}${23.8}_{\pm 1.5}$ & \cellcolor{tabsecond}${7.4}_{\pm 0.2}$ \\
    EGNO$_{\text{RECM}}$ & \cellcolor{tabfirst}$\mathbf{22.6}_{\pm 0.8}$ & \cellcolor{tabfirst}$\mathbf{6.6}_{\pm 0.5}$ \\
    \bottomrule
\end{tabular}
\end{minipage}
\end{table*}
In order to be able to provide the guarantees on the expectation of $l_\theta$, described at the beginning of this section, we focus on compact groups where a normalized Haar measure over the group exists. This setting covers a broad range of groups of interest, including the rotation groups $\mathrm{SO}(3)$ and $\mathrm{SO}(2)$, their finite subgroups (e.g., cyclic and dihedral groups), and finite groups such as the permutation group $S_n$. Additionally, while the hidden state $h_t$ is a multi-dimensional vector, because the update rule is applied ``pointwise'' and independently at each dimension, we provide results of convergence for scalar $h_t$. The results can be trivially extended to the vector case by applying them independently for each component (see Appendix \ref{sec:multvariable}):
\begin{theorem} \textbf{[Proof provided in Appendix \ref{proof:l2}]}\label{upperbound:lemma}
Let $p$ be a probability distribution on metric space $Q = Z \times Y$, let $G$ be a compact group with Haar measure $\lambda_G$ acting on $Q$ by continuous unitary representations $\rho_{\mathrm{in}},\rho_{\mathrm{out}}$ i.e. $T_g(z,y) = (\rho_{\mathrm{in}}(g)z,\rho_{\mathrm{out}}(g)y)$, under which measure $p$ is preserved and let $C_G\subset G$ be a finite topological generating set of $G$. Define:
\begin{align*}
 p_{C_G}=\frac{1}{\left|C_G\right|}\sum_{g\in C_G} p(T_g^{-1} z),\quad  p_G=\int_G p(T_g^{-1}z)d\lambda_G(g).
\end{align*}
Then if $l_\theta$ defined in \eqref{ltheta} with $r_\theta:Q\to\mathbb R$ is a $B$-Lipschitz function, we have that
\[\left|\mathbb{E}_{(z,y)\sim p}[l_\theta(z,y)]\right|\leq 2B W_1(p,p_{G}),\]
where $W_1$ is the $1$-Wasserstein distance between probability measures. Additionally, if $\{r_\theta\}_{\theta\in \Theta}$ is a family of universal approximators of all bounded continuous functions with Lipschitz constant less than or equal to a $C>0$, then for any $C'\in(0,C)$ there exists $\theta^*\in\Theta$ for which $\mathbb{E}_{(z,y)\sim p}[l_{\theta^*}(z,y)]\geq C' W_1(p,p_{C_G})$.
\end{theorem}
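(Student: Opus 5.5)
The plan has two parts: an upper bound from Kantorovich--Rubinstein duality, and a lower bound from near-optimal dual potentials.

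\emph{Upper bound.} By linearity and change of variables,
\[
\mathbb{E}_p[l_\theta]=\mathbb{E}_p[r_\theta]-\frac{1}{|C_G|}\sum_{g\in C_G}\mathbb{E}_p[r_\theta\circ T_g]=\mathbb{E}_p[r_\theta]-\mathbb{E}_{p_{C_G}}[r_\theta].
\]
Here we use that $\mathbb{E}_p[r\circ T_g]=\mathbb{E}_{(T_g)_\#p}[r]$, with $(T_g)_\#p$ the pushforward written in the statement as $p(T_g^{-1}\cdot)$. We then insert $p_G$ and use the triangle inequality:
\[
|\mathbb{E}_p[l_\theta]|\le |\mathbb{E}_p r_\theta-\mathbb{E}_{p_G} r_\theta|+|\mathbb{E}_{p_G} r_\theta-\mathbb{E}_{p_{C_G}} r_\theta|.
\]
By Kantorovich--Rubinstein duality, the first term is at most $B\,W_1(p,p_G)$. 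For the second term, note that $p_G$ is $G$-invariant by left invariance of Haar measure, so $p_G=(T_g)_\#p_G$ for every $g$. Hence $\mathbb{E}_{p_G}r_\theta-\mathbb{E}_{p_{C_G}}r_\theta$ equals the average over $g\in C_G$ of $\mathbb{E}_{(T_g)_\#p_G}r_\theta-\mathbb{E}_{(T_g)_\#p}r_\theta$. Since the representations are unitary, each $T_g$ is an isometry of $Q$, so $r_\theta\circ T_g$ is still $B$-Lipschitz. Each summand is therefore bounded by $B\,W_1(p_G,p)$, and the two terms together give $2B\,W_1(p,p_G)$.

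\emph{Lower bound.} By the computation above, $\mathbb{E}_p[l_\theta]=\mathbb{E}_p r_\theta-\mathbb{E}_{p_{C_G}}r_\theta$. Kantorovich--Rubinstein gives
\[
W_1(p,p_{C_G})=\sup_{\mathrm{Lip}(f)\le 1}\big(\mathbb{E}_p f-\mathbb{E}_{p_{C_G}}f\big).
\]
If $W_1(p,p_{C_G})=0$ the claim is trivial, since any $\theta$ with $r_\theta=0$, or any approximant of $0$, works up to the approximation tolerance. Otherwise fix $C'<C$ and choose $\eta>0$ small.
\begin{enumerate}
\item Pick a $1$-Lipschitz $f$ with $\mathbb{E}_p f-\mathbb{E}_{p_{C_G}}f\ge (1-\eta)W_1$. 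Replace it by a bounded truncation $\max(\min(f,M),-M)$, which is still $1$-Lipschitz; by dominated convergence (assuming finite first moments, needed anyway for $W_1$ to be finite) this loses at most $\eta W_1$ for $M$ large.
\item Set $\tilde f=C''f$ with $C'<C''<C$, so $\tilde f$ is bounded, continuous and $C''$-Lipschitz, and $\mathbb{E}_p\tilde f-\mathbb{E}_{p_{C_G}}\tilde f\ge C''(1-2\eta)W_1$.
\item Use universality to pick $\theta^*$ with $r_{\theta^*}$ in the class (Lipschitz $\le C$) approximating $\tilde f$ closely enough that the difference of the two expectations moves by at most $\epsilon$.
\end{enumerate}
Choosing $\eta$ and $\epsilon$ so that $C''(1-2\eta)W_1-\epsilon\ge C'W_1$ then yields $\mathbb{E}_p[l_{\theta^*}]\ge C'W_1(p,p_{C_G})$.

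The main obstacle is the sense of ``universal approximation'' in step 3. Uniform approximation on all of $Q$ is the clean case, since both expectations then move by at most $\epsilon$. If only uniform-on-compacts approximation is available, I would use tightness of the finitely many measures $p$ and $(T_g)_\#p$, $g\in C_G$: choose a compact set $K$ carrying all but $\delta$ of their mass, and control the tails using the uniform bound on $r_{\theta^*}$ and $\tilde f$, i.e.\ assuming the approximants are uniformly bounded.
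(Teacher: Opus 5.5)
Your proposal is correct and follows essentially the same route as the paper. The upper bound uses Kantorovich--Rubinstein duality together with the isometry of each $T_g$ and the $G$-invariance of $p_G$; the paper merely packages your two triangle-inequality terms as the single observation that $l_\theta$ is $2B$-Lipschitz with $\mathbb{E}_{p_G}[l_\theta]=0$. The lower bound, as in the paper, approximates a dual potential for $W_1(p,p_{C_G})$ by some $r_{\theta^*}$. Your lower-bound argument is in fact more careful than the paper's, which assumes the Kantorovich supremum is attained and treats the optimal potential as bounded without comment. You instead work with a near-optimal potential, truncate it, and state explicitly which mode of universal approximation you need.
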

Using the proposed $l_\theta$, we can guarantee that its expectation, and thus the converging absolute values of $h_t$ are upper bounded by the distance between the input-target distribution and its symmetrized equivalent. By implementing the $s$ non-linearity of Equation \ref{eq:nonlin}, using a GeLU \citep{hendrycks2016gaussian}, and given that $\lVert w_{a_i}\rVert \leq 1$, the relaxation modulation $a_{i,t}$ converges to $\left|a_{i}^*\right|=\left|s(w_{a_i}^Th^*)\right|\leq 2BW_1(p,p_G)$ for the case of scalar $h_t$ and to $|a^*_i|\leq 2\sqrt{m}BW_1(p,p_G)$ for the general case of a $m$-dimensional state vector $h_t$ (see Appendix \ref{sec:multvariable} for details). This overall bound thus verifies the first desired property of RECM.

For the second property to hold, we need to show that given an expressive enough function approximator $r_\theta$ (e.g. an MLP \cite{hornik1989multilayer}), there exist parameters $\theta^*$ for which $h_t$ does not converge to zero for non-symmetric distributions. Theorem \ref{upperbound:lemma} shows that there exists $\theta^*$ such that $h^*=\mathbb{E}_{(z,y)\sim p}[l_{\theta^*}(z,y)]>C'W_1(p,p_{C_G})$, thus the second property is equivalent of showing that for any non-symmetric distribution $p$ we have that $W_1(p,p_{C_G})>0$ meaning $p\not=p_{C_G}$. This is a  consequence of the following:
\begin{lemma}\label{lemma:gap} \textbf{[Proof provided in Appendix \ref{proof:l3}]}
Let $p$ be a probability measure over $Q=Z\times Y$, let $C_G$ be a finite topologically generating set of the compact group $G$. For an action $T_g(z,y)=(\rho_{in}(g)z,\rho_{out}(g)y)$ by continuous representations $\rho_{\mathrm{in}},\rho_{\mathrm{out}}$ on $Q=Z\times Y$, define $p_{C_G}:=\frac{1}{|C_G|}\sum_{g\in C_G}p\circ T_g^{-1}$. Then the following holds
\begin{align*}\
    p=p_{C_G} \iff p=p\circ T_g\quad\forall g\in G.
\end{align*}
\end{lemma}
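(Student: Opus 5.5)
The direction ($\Leftarrow$) is immediate. If $p\circ T_g=p$ for all $g\in G$, then, since $T_g^{-1}=T_{g^{-1}}$, also $p\circ T_g^{-1}=p$ for every $g$. Hence every summand in $p_{C_G}$ equals $p$, and so $p_{C_G}=p$. The work is in ($\Rightarrow$). The plan has three steps: show that the averaged equality forces each generator individually to preserve $p$; pass to finite words in $C_G$; then extend to all of $G$ by density and continuity.

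For the first step I would use a strict convexity argument. Suppose $p=\frac{1}{|C_G|}\sum_{g\in C_G}p\circ T_g^{-1}$. I would integrate a strictly convex functional of the density, or more robustly work with test functions. Take any bounded continuous $\varphi$ on $Q$ and write $\nu_g:=p\circ T_g^{-1}$. The hypothesis gives $\int\varphi\,dp=\frac{1}{|C_G|}\sum_g\int\varphi\circ T_g\,dp$.

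A cleaner route is to use the standing assumption (as in Theorem~\ref{upperbound:lemma}) that the representations are unitary. Then $T_g$ preserves the norm $\|q\|^2=\|z\|^2+\|y\|^2$, so each $\nu_g$ has the same radial distribution as $p$. I would restrict to each sphere, or more generally use a strictly convex quantity that is preserved by isometries, to conclude $\nu_g=p$. Concretely, I would use the $L^2$ norm of densities with respect to a reference measure invariant under $G$, for instance $p_G$ itself. Each $\nu_g$ is absolutely continuous with respect to $p_G$ whenever $p$ is, and $\|\nu_g\|_{L^2(p_G)}=\|p\|_{L^2(p_G)}$ by invariance of $p_G$. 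Equality in the triangle inequality for $\|\frac1{|C_G|}\sum_g\nu_g\|=\|p\|$ in a strictly convex space then forces all $\nu_g$ to be equal to $p$.

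I expect this to be the main obstacle when $p$ has no $L^2$ density relative to $p_G$. In that case I would replace $L^2$ by the strictly convex function $\Phi(u)=u^2/(1+u)$, or approximate $p$ by mollifying along the group. If that fails, I would argue directly with extremal points. The set of probability measures sharing the (invariant) law of $\|q\|$ is not enough by itself, so instead I would use the Lebesgue decomposition with respect to $p_G$ and apply the convexity argument on the absolutely continuous part and on the singular part separately.

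Once $p\circ T_g^{-1}=p$ for every $g\in C_G$, the rest is routine. By the homomorphism property $T_{g_1g_2}=T_{g_1}T_{g_2}$, invariance holds for every word, that is, for all of $C_G^\star$; it also holds for inverses, since $p\circ T_g=p$ follows from $p\circ T_g^{-1}=p$. For a general $g\in G$, I would pick a net $g_\alpha\in C_G^\star$ converging to $g$. Then for bounded continuous $\varphi$, continuity of the representations gives $\varphi\circ T_{g_\alpha}\to\varphi\circ T_g$ pointwise, and dominated convergence yields $\int\varphi\circ T_g\,dp=\lim\int\varphi\circ T_{g_\alpha}\,dp=\int\varphi\,dp$. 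Since bounded continuous functions determine Borel probability measures on a metric space, $p\circ T_g=p$ for all $g\in G$.
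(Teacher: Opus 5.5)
\textbf{Gap in step 1.} Your step 1 (each $g\in C_G$ individually preserves $p$) is only established when $p\ll p_G$. In that case the argument is sound. For any continuous action of a compact group, $p_G$ is $G$-invariant; unitarity plays no role, and the lemma does not assume it. The measure $p\circ T_g^{-1}$ then has density $f\circ T_g^{-1}$, and strict convexity of $\|\cdot\|_{L^2(p_G)}$ forces $f\circ T_g^{-1}=f$. If $f$ is merely in $L^1(p_G)$, Jensen with $\Phi(u)=u^2/(1+u)$ gives the same conclusion.

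But the lemma concerns an arbitrary probability measure, and absolute continuity with respect to $p_G$ is not available a priori. It fails for general $p$: for $G=\mathrm{SO}(2)$ on $\mathbb{R}^2$ and $p=\delta_{q_0}$ with $q_0\neq 0$, one has $p\perp p_G$. For $p$ satisfying the hypothesis, $p\ll p_G$ is only known a posteriori, as a consequence of the invariance you are trying to prove (which gives $p=p_G$).

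Your Lebesgue-decomposition fallback does not close this. Invariance of $p_G$ and uniqueness of the decomposition do show that the singular part $p_s$ satisfies $p_s=\frac{1}{|C_G|}\sum_{g\in C_G}p_s\circ T_g^{-1}$ by itself. However, $p_s$ has no density with respect to any invariant reference measure you have at hand, so there is nothing for a convexity argument to act on. Since the lemma's conclusion forces $p=p_G$, what actually has to be shown is $p_s=0$. That is the original problem again, in exactly the case your method cannot see. The remaining fallbacks (extremal points, ``mollifying along the group'') are not carried out, so as written the proof covers only $p\ll p_G$.

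\textbf{The paper's route.} The missing idea is to average over the whole group rather than isolate generators. The paper writes the hypothesis as $p=p*\mu$ with $\mu=\frac{1}{|C_G|}\sum_{g\in C_G}\delta_g$. It iterates to $p=p*\overline{\mu}_n$ for the Ces\`aro means $\overline{\mu}_n=\frac1n\sum_{k=0}^{n-1}\mu^{*k}$. It then uses the Kawada--It\^o theorem ($\overline{\mu}_n\to\lambda_G$ weakly, since $\mu$ is adapted) and continuity of $g\mapsto\int f\circ T_g\,dp$ to get $p=p*\lambda_G$. This measure is $G$-invariant, and the argument needs no density assumption and no per-generator step.

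\textbf{Repairing your mollification idea.} The idea can be made rigorous, but only with a \emph{central} mollifier. Let $\eta$ be a bounded, conjugation-invariant probability density on $G$, for example the normalized indicator of a conjugation-invariant neighbourhood of $e$ (these form a neighbourhood basis in any compact group). Set $p^\eta:=\int_G p\circ T_h^{-1}\,\eta(h)\,d\lambda_G(h)$. Then:
\begin{itemize}
\item $p^\eta\le\|\eta\|_\infty\,p_G$, so $p^\eta$ has a bounded density with respect to $p_G$.
\item By conjugation invariance of $\eta$ and $\lambda_G$, $p^\eta$ inherits the hypothesis $p^\eta=\frac{1}{|C_G|}\sum_{g\in C_G}p^\eta\circ T_g^{-1}$.
\item Your $L^2$ argument then gives $p^\eta\circ T_g^{-1}=p^\eta$ for every $g\in C_G$.
\item Letting $\eta$ concentrate at $e$ gives $p^\eta\to p$ weakly, hence $p\circ T_g^{-1}=p$.
\end{itemize}
Without centrality the hypothesis need not be inherited when $G$ is non-abelian, because the computation requires commuting $\eta$ past each $\delta_g$. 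This repaired route would be a genuinely more elementary alternative to Kawada--It\^o, trading the random-walk limit theorem for Hilbert-space strict convexity.

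\textbf{Final step.} Your words-and-density step is then fine, except that dominated convergence does not hold along nets. Use that $G$ acts through its image in $GL(Z)\times GL(Y)$, which is a compact metrizable group, so sequences suffice. Alternatively, argue directly from continuity of $g\mapsto\int\varphi\circ T_g\,dp$.
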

Theorem \ref{upperbound:lemma} and Lemma \ref{lemma:gap} show that our proposed update rule and function $l_\theta$ allow the model to freely learn the level of modulation for the equivariant and non-equivariant distributions, using only the task supervision, while guaranteeing that in cases where the distribution of intermediate feature and ground truth outputs is fully invariant the corresponding layer will converge to an equivariant solution.

\section{Experiments}\label{sec:experiments}
\begin{figure*}[t]
    \centering
    \begin{tabular}{@{}cc@{}}
        \includegraphics[width=0.48\linewidth]{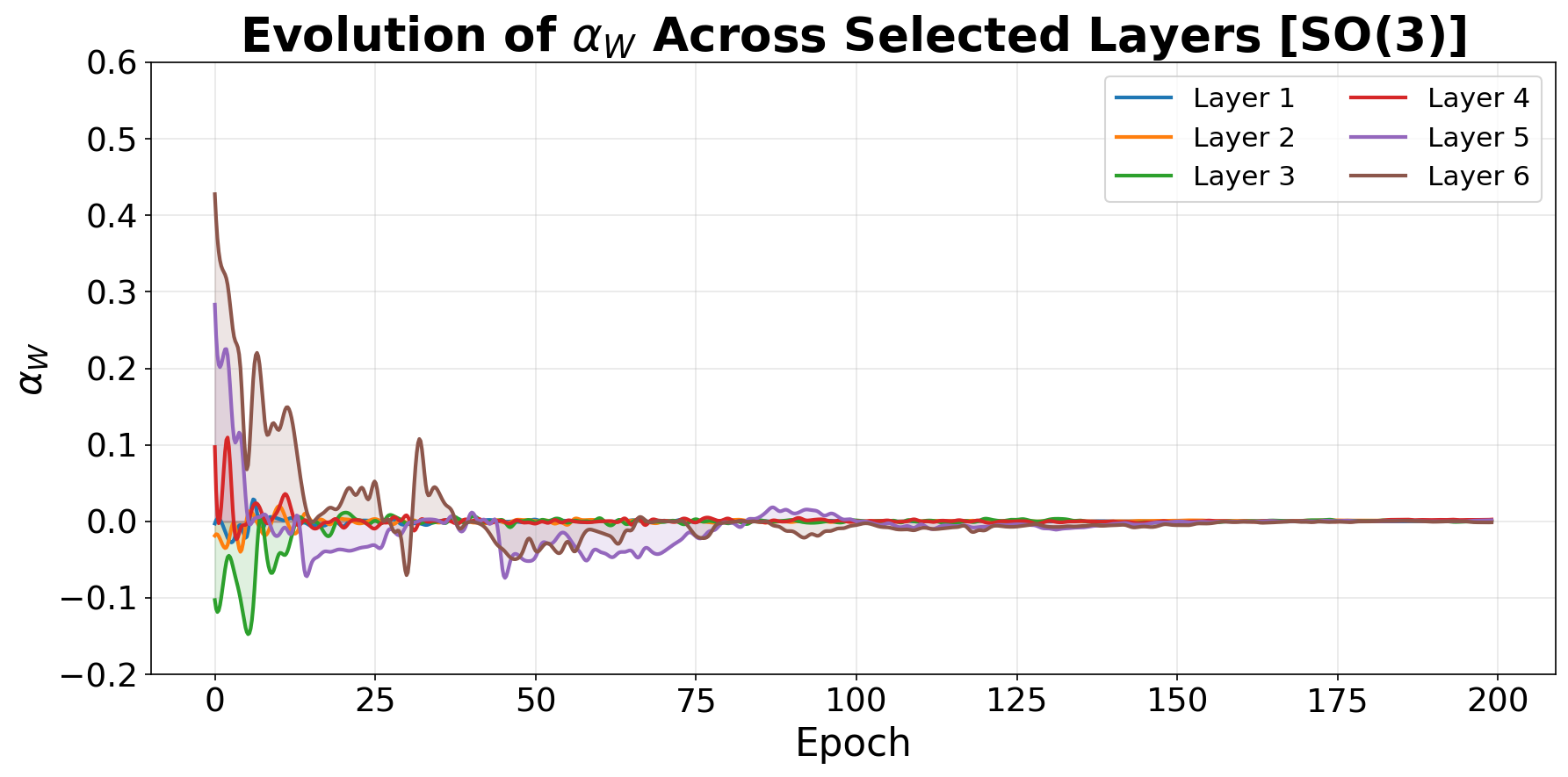} &
        \includegraphics[width=0.48\linewidth]{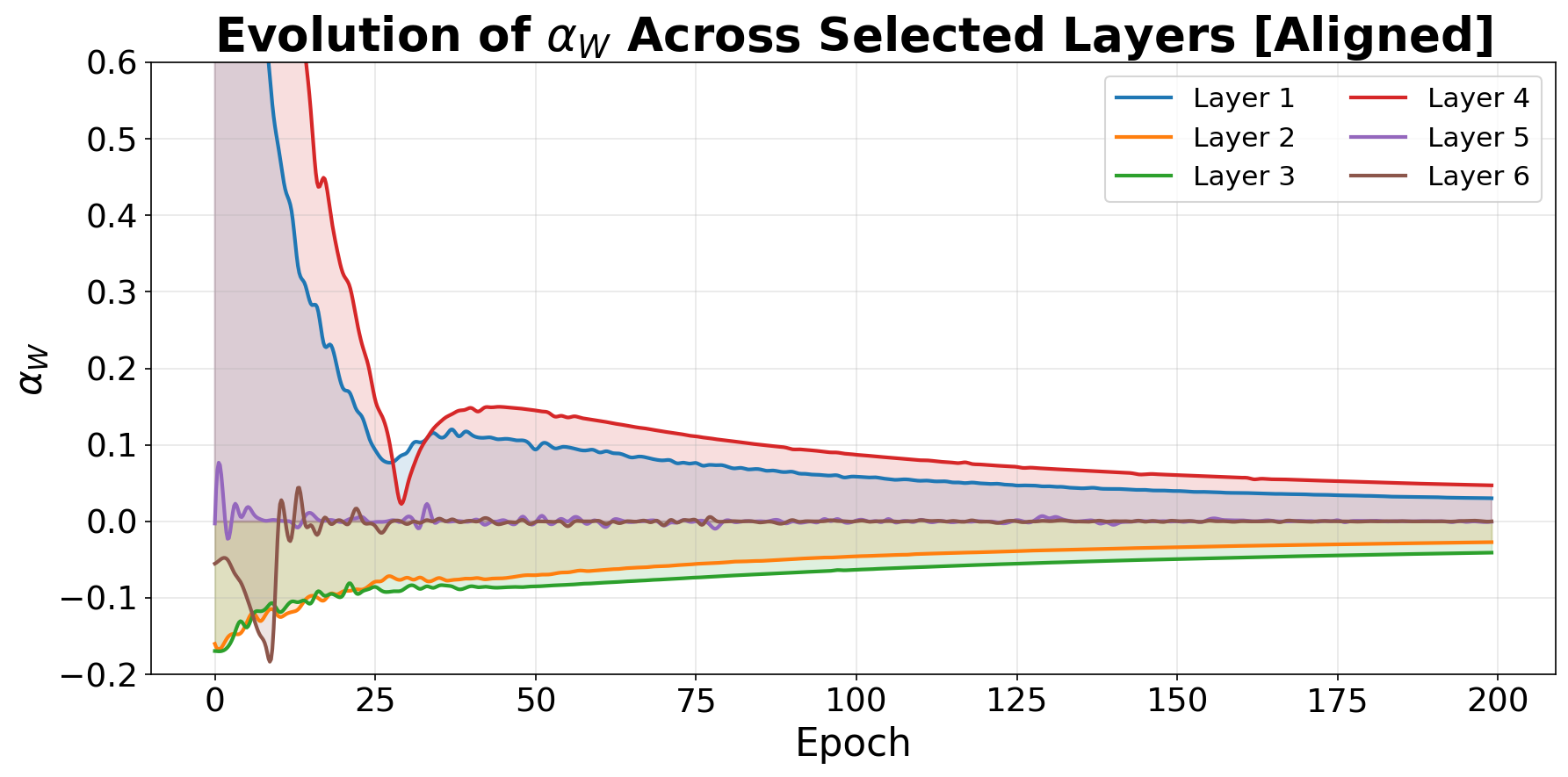}  \\
        \small (a) Fully Symmetric SO(3) Distribution & \small (b) Non-Symmetric Aligned Distribution
    \end{tabular}
    \caption{Evolution of relaxation modulation parameter $\alpha_w$ of RECM applied on ModelNet \cite{shapenet2015} classification of  a full $\mathrm{SO}(3)$ symmetric dataset (input shapes with random poses) and a non-symmetric aligned dataset ( input shapes with pre-aligned poses)}
    \label{fig:curves}
\end{figure*}
\subsection{Ablation Study}\label{sec:Ablations}
We first verify the effectiveness of our proposed recurrent constraint modulation by performing ablation studies on the task of shape classification. Specifically, we use the lightweight VN-PointNet architecture \citep{deng2021vn} to classify the categories of sparsely sampled point clouds (300 points) from the ModelNet40 \citep{shapenet2015} dataset. This experiment studies the behavior of the update layer proposed in \ref{sec:MainLayer} on target distributions with different types of symmetry. Thus, we evaluate our modified model (VN-PointNet+RECM) on a ``Rotated" dataset where pointclouds are rotated by a random $\mathrm{SO}(3)$ rotation and a ``Aligned" dataset that uses the ModelNet40 aligned pointclouds. For the relaxation of the equivariant constraints, we follow the formulation presented in Section \ref{sec:prel} with the addition of an additive noise term with a learnable standard deviation parameter $\sigma^2$. As a result, we replace all linear layers of VN-PointNet with their relaxed counterparts with form $f(x)=\beta_t W_\mathrm{eq}x+\alpha_{W,t}W_\mathrm{un}x+\alpha_{b,t}b+\alpha_{\mathrm{noise},t}\epsilon_s$ with $\epsilon \sim \mathcal{N}(0,\sigma I)$ being sampled Gaussian noise. Since modulation parameters $\alpha$ converge close enough to zero, but not at exactly zero, we remove the additive non-equivariant terms from contributing to the output of the layer if $|\alpha|<0.01$. (See  Appendix \ref{sec:ImplementationDetails} for more experimental details)  

Table \ref{tab:abl} shows the final test instance and class accuracy achieved by a baseline VN-PointNet, a model where we schedule the relaxation of the equivariant constraint using a fixed schedule similar to \citep{PennPaper} (VN-PointNet+ES.), a model that we leave fully unconstrained to learn the added relaxation term (VN-PointNet+RPP) \citep{ResidPathwayPriors} and a model with our proposed recurrent equivariant modulation update (VN-PointNet+RECM). We observe how our proposed recurrent modulation adjusts to the symmetries of the task distribution and outperform all baselines in both the rotated and the aligned version without any additional adjustment required in any of the two cases.

Figure \ref{fig:curves} illustrates how the parameters $\alpha_{W,t}$ of different layers are updated by RECM  during training on the aligned and rotated dataset. Appendix \ref{sec:appendix_modulation_evo} also provides the curves for the remaining relaxation terms $\alpha_{b,t},\alpha_{\mathrm{noise},t}$. In the ``Rotated" case, modulation for the unconstrained terms converges to zero, with different layers showcasing different convergence rates. This result verifies the theoretical results of Section \ref{sec:design} about the convergence to a fully equivariant model in symmetric distributions without the need for pre-specifying per-layer levels of relaxation. On the other hand, in the ``Aligned" case, the state convergence captures the lack of symmetry, with some layers converging to non-equivariant solutions. After removing the inactive layers with small $\alpha$, the ``Rotated" trained model keeps 2M active parameters while the model trained on aligned distribution keeps 4.5M active parameters, showcasing how RECM  dynamically adjusts the active parameters of the model used during inference based on the symmetries of the underlying distributions. 

\subsection{Comparisons On Tasks Requiring Different Level of Relaxation}
\begin{table}[t]
\caption{Molecule conformer generation coverage recall and precision on GEOM-DRUGS ($\delta = 0.75\text{\AA}$). RECM is applied to two different equivariant models ETFlow and DiTMC and compared to prior equivariant and non-equivariant models.}
\label{tab:drugs}
\centering
\renewcommand{\arraystretch}{0.9} 
\begin{tabular}{lcccc}
\toprule
 & \multicolumn{2}{c}{Recall $\uparrow$} & \multicolumn{2}{c}{Precision $\uparrow$} \\
    \hline
 & mean & median & mean & median\\
    \hline
    GeoDiff         & 42.10 & 37.80 & 24.90 & 14.50 \\
    GeoMol          & 44.60 & 41.40 & 43.00 & 36.40 \\
    Torsional Diff. & 72.70 & 80.00 & 55.20 & 56.90 \\
    MCF           & 79.4 & \cellcolor{tabfirst}87.5 & 57.4 & 57.6 \\
    \hline
    ETFlow-Eq      & 79.53 & 84.57 & 74.38 & 81.04 \\
    ETFlow+RECM &  79.44 & \cellcolor{tabsecond}85.64 & 75.10 & \cellcolor{tabthird}82.02 \\
    DiTMC-Eq  & \cellcolor{tabfirst}80.8 & \cellcolor{tabthird}85.6 & \cellcolor{tabthird}75.3 & 82.0 \\
    DiTMC-Unc  & \cellcolor{tabthird}79.9 & 85.4 & \cellcolor{tabfirst}76.5 & \cellcolor{tabfirst}83.6 \\
    DiTMC+RECM & \cellcolor{tabsecond}80.6 & 85.5 & \cellcolor{tabsecond}76.1 & \cellcolor{tabsecond}83.1 \\ 
\midrule
\end{tabular}
\end{table}

In this section we showcase the performance of RECM in different tasks requiring different level of equivariant constraints. We apply our proposed framework both to the \emph{fully equivariant} task of N-body simulation \citep{pmlr-v80-kipf18a}, and the task of motion capture trajectory prediction \cite{CMU2003}, which, as discussed in \citet{manolache2025learning}, benefits more from approximate equivariant constraints. To evaluate RECM we compare with prior works that either utilize a predefined schedule of constraint modulation \cite{PennPaper} projecting back to the equivariant parameter space by the end of training (ES), or implicitly imposing and optimizing for the exact (ACE-exact) or the approximate equivariant constraint (ACE-appr)
\cite{manolache2025learning}. 
\par{\textbf{(N-body Simulation)}} The N-body simulation task consists of predicting the positions of 5 electrically charged particles after 1000 timesteps, given their initial positions, charges, and velocities. This task models the charges as free-moving particles without any external force, and thus it is fully equivariant to $SO(3)$ rotations. As the baseline on top of which we apply RECM, we utilize the SEGNN model proposed by \cite{brandstetter2021geometric}. In Table \ref{tab:both} we compare the error achieved by different methods that utilize constraint relaxation along with equivariant baselines such as the Equivariant Flows (EF) \cite{kohler2019equivariant}, the $\mathrm{SE}(3)$-Transformer \cite{FuchsSE3TF} and SEGNN.  In this simple, fully equivariant task, RECM outperforms all other baselines, including prior work that leverages constraint relaxation on top of SEGNN and fully equivariant models.
\begin{figure}
    \centering
    \includegraphics[width=0.9\linewidth]{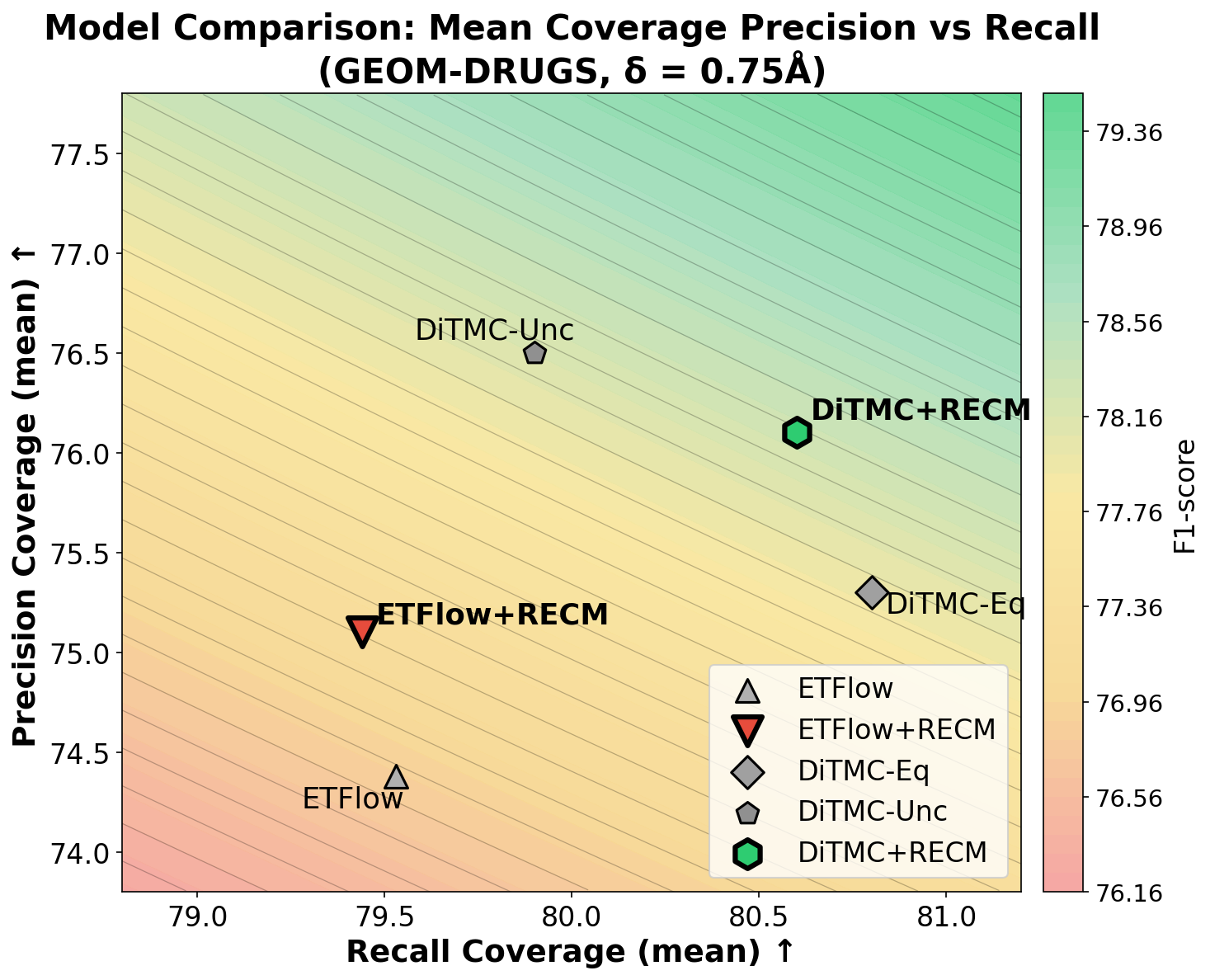}
    \caption{Coverage Precision and Recall achieved on GEOM-DRUGS ($\delta=0.75\text{\AA}$) for different method along with there combined F1 score. The F1 score of each point of the two dimensional diagram is visualized by the underlying color, with green regions corresponding to points with higher F1-score values. Additionally, the diagonal lines correspond to points with the same F1 scores.}
    \label{fig:f1-score}
\end{figure}
\par{\textbf{(Motion Capture)}}
While the N-body simulation was a synthetic task designed to be fully equivariant, we are interested in investigating the ability of our framework to adjust its level of relaxation in tasks where exact symmetry is not optimal. Specifically, we consider the task of trajectory prediction in motion capture, where the goal is to predict future trajectories from input motion capture sequences. As shown in \citet{manolache2025learning}, this task benefits from equivariant architectures but achieves optimal performance with approximate equivariant networks. 
We use the setup proposed by \citet{xu2024equivariant}, which includes the base version of EGNO. We compare with both equivariant baselines such as Equivariant Flows (EF), Tensor Field Networks (TFN) \cite{Thomas2018TensorFN}, $\mathrm{SE}(3)$-Transformer ($\mathrm{SE}(3)$-Tr.), EGNN  and EGNO \cite{xu2024equivariant} as well the equivariant EGNO using the previously proposed constraint modulation approaches \cite{PennPaper,manolache2025learning}  EGNO$_\text{ES}$, EGNO$_\text{ACE-exact}$ and  EGNO$_\text{ACE-exact}$. As shown in Table \ref{tab:both}, while previous works are required to train both the exact equivariant and approximate equivariant architectures to conclude the optimal level of relaxation, our method can recover the appropriate relaxation parameters and achieve the best performance  with a single training run.

In both tasks, RECM is able to adjust the expressivity of the models by modulating the constraint relaxation so that it matches the required task symmetry. In Appendix~\ref{sec:overhead}, we provide a more detailed presentation of the active parameters at inference using RECM compared to the baseline models. 
\subsection{Conformer Generation}
To demonstrate the scalability of our proposed framework, we apply it on the large scale GEOM-Drugs dataset \cite{axelrod2022geom}, to solve the task of molecular conformer generation. In this task, the goal of the model is to generate the low-energy 3D structures (local energy minima) given only their molecular graph structure as input. An active debate is ongoing within the machine learning community, regarding whether or not equivariant architectures are actually necessary for achieving optimal performance for this task. While \citet{jing2022torsional} (Torsional Diff) showed significant benefits of incorporating an equivariant network, a later work by \citet{wang2024swallowing} (MCF) described a fully unconstrained model achieving state-of-the-art results. More recent works investigate reintroducing implicit equivariance bias by structurally constraining the generation model \cite{hassan2024etflow,frank2025sampling} (ETFlow,DiTMC), showing improvements in the generation precision. This ongoing debate makes the conformer generation task ideal for applying RECM, since, contrary to the previous methods, it does not require a predefined target level of equivariant constraint satisfaction.

Following the experimental setup of \citet{hassan2024etflow}, we evaluate our method when applied on the equivariant  ETFlow and on the equivariant DiTMC-Eq, which is a diffusion transformer with equivariant positional encoding. We compare with previous equivariant methods GeoDiff \citep{xu2022geodiff}, GeoMol \citep{ganea2021geomol}, Torsional Diff  and the non-equivariant MCF. Additionally, for DiTMC, we compare with both its equivariant variant DiTMC-Eq discussed above and its non-equivariant variant DiTMC-Unc, which replaces the equivariant linear layers with  unconstrained  ones and uses non-equivariant relative positional encoding.  We refer the reader to Appendix \ref{sec:conf_metrics} for a detailed description of the evaluation metrics.

In Table \ref{tab:drugs} we show the mean and median coverage precision and recall of the different methods. While the fully unconstrained MCF achieves state-of-the-art recall, it has a very low precision. On the other hand, several equivariant methods achieve significantly improved precision while still maintaining competitive recall. When applying our RECM framework, we observe that we can further improve the precision of the generation of the equivariant models while retaining the competitive recall. This is more apparent in Figure \ref{fig:f1-score}, where we plot together the precision and recall of the different versions of ETFlow and DiTMC overlayed over the F1-score (their harmonic mean). In both cases, applying RECM improves the overall F1-score compared to both the equivariant and the unconstrained variants, which supports the main argument of this work about the benefits of equivariant constraints adjusted to the specific task.
\section{Conclusion}
In this work, we introduced Recurrent Equivariant Constraint Modulation (RECM), a framework designed to allow a model to adapt the level of relaxation of its equivariant constraint without requiring any prior designed relaxation schedules or equivariant-enforcing penalties. We provided theoretical guarantees demonstrating how RECM correctly converges to equivariant models in the case of a fully symmetric distribution, while it provides the flexibility for models to converge to approximate equivariant solutions in cases of non-symmetric tasks. Empirical evaluations across different equivariant and non-equivariant tasks validate the theoretical results and demonstrate that RECM consistently outperforms existing baselines.
\section*{Impact Statement}
This paper presents work whose goal is to advance the practice of equivariant neural networks within machine learning. As such, the potential societal consequences of our work are indirect and aligned with the general progress in the field, none of which we feel must be specifically highlighted here.

\section*{Acknowledgements}
SP and KD thank NSF FRR 2220868, NSF IIS-RI 2212433, ONR N00014-22-1-2677 for support. MP thanks the support of National Center for Artificial Intelligence CENIA, ANID Basal Center FB210017. ST was supported by the Computational Science and AI Directorate (CSAID), Fermilab. This work was produced by Fermi Forward Discovery Group, LLC under Contract No. 89243024CSC000002 with the U.S. Department of Energy, Office of Science, Office of High Energy Physics. The United States Government retains, and the publisher, by accepting the work for publication, acknowledges that the United States Government retains a non-exclusive, paid-up, irrevocable, world-wide license to publish or reproduce the published form of this work, or allow others to do so, for United States Government purposes. The Department of Energy will provide public access to these results of federally sponsored research in accordance with the DOE Public Access Plan (\href{http://energy.gov/downloads/doe-public-access-plan}{http://energy.gov/downloads/doe-public-access-plan}).

\bibliography{sample.bib}

@inproceedings{wang2024swallowing,
    title={Swallowing the Bitter Pill: Simplified Scalable Conformer Generation},
    author={Wang, Yuyang and Elhag, Ahmed A. and Jaitly, Navdeep and Susskind, Joshua M. and Bautista, Miguel {\'A}ngel},
    year={2024},
    booktitle={Forty-first International Conference on Machine Learning},
}

@article{
flinth2023optimization,
title={Optimization Dynamics of Equivariant and Augmented Neural Networks},
author={Oskar Nordenfors and Fredrik Ohlsson and Axel Flinth},
journal={Transactions on Machine Learning Research},
issn={2835-8856},
year={2025},
url={https://openreview.net/forum?id=PTTa3U29NR},
}

@inproceedings{
Gordon2020PermutationEM,
title={Permutation Equivariant Models for Compositional Generalization in Language},
author={Jonathan Gordon and David Lopez-Paz and Marco Baroni and Diane Bouchacourt},
booktitle={International Conference on Learning Representations},
year={2020},
url={https://openreview.net/forum?id=SylVNerFvr}
}

@inproceedings{veefkind2024probabilistic,
  title={A Probabilistic Approach to Learning the Degree of Equivariance in Steerable CNNs},
  author={Veefkind, Lars and Cesa, Gabriele},
  booktitle={International Conference on Machine Learning},
  pages={49249--49309},
  year={2024},
  organization={PMLR}
}

@article{ordonez2024morphological,
  title={Morphological Symmetries in Robotics},
  author={Ordo{\~n}ez-Apraez, Daniel and Turrisi, Giulio and Kostic, Vladimir and Martin, Mario and Agudo, Antonio and Moreno-Noguer, Francesc and Pontil, Massimiliano and Semini, Claudio and Mastalli, Carlos},
  journal={arXiv preprint arXiv:2402.15552},
  year={2024}
}

@article{ZhuWangGrasp2022,
  title={Sample efficient grasp learning using equivariant models},
  author={Zhu, Xupeng and Wang, Dian and Biza, Ondrej and Su, Guanang and Walters, Robin and Platt, Robert},
  journal={arXiv preprint arXiv:2202.09468},
  year={2022}
}

@article{abramson2024alphafold3,
  title={Accurate structure prediction of biomolecular interactions with {AlphaFold} 3},
  author={Abramson, Josh and Adler, Jonas and Dunger, Jack and Evans, Richard and Green, Tim and Pritzel, Alexander and Ronneberger, Olaf and Willmore, Lindsay and Ballard, Andrew J and Bambrick, Joshua and others},
  journal={Nature},
  volume={630},
  number={8016},
  pages={493--500},
  year={2024},
  publisher={Nature Publishing Group},
  doi={10.1038/s41586-024-07487-w}
}

@inproceedings{elesedy2021provably,
  title={Provably strict generalisation benefit for equivariant models},
  author={Elesedy, Bryn and Zaidi, Sheheryar},
  booktitle={International conference on machine learning},
  pages={2959--2969},
  year={2021},
  organization={PMLR}
}

@article{hendrycks2016gaussian,
  title={Gaussian Error Linear Units (GELUs)},
  author={Hendrycks, Dan and Gimpel, Kevin},
  journal={arXiv preprint arXiv:1606.08415},
  year={2016}
}

@article{kabsch1976solution,
  title={A solution for the best rotation to relate two sets of vectors},
  author={Kabsch, Wolfgang},
  journal={Acta Crystallographica Section A},
  volume={32},
  number={5},
  pages={922--923},
  year={1976},
  publisher={International Union of Crystallography}
}

@article{hornik1989multilayer,
  title={Multilayer feedforward networks are universal approximators},
  author={Hornik, Kurt and Stinchcombe, Maxwell and White, Halbert},
  journal={Neural Networks},
  volume={2},
  number={5},
  pages={359--366},
  year={1989},
  publisher={Elsevier}
}

@inproceedings{virmaux2018lipschitz,
  title={Lipschitz regularity of deep neural networks: analysis and efficient estimation},
  author={Virmaux, Aladin and Scaman, Kevin},
  booktitle={Advances in Neural Information Processing Systems},
  volume={31},
  pages={3835--3844},
  year={2018}
}

@inproceedings{
ganea2021geomol,
title={GeoMol: Torsional Geometric Generation of Molecular 3D Conformer Ensembles},
author={Octavian-Eugen Ganea and Lagnajit Pattanaik and Connor W. Coley and Regina Barzilay and Klavs Jensen and William Green and Tommi S. Jaakkola},
booktitle={Advances in Neural Information Processing Systems},
year={2021},
url={https://openreview.net/forum?id=af_hng9tuNj}
}

@inproceedings{
xu2022geodiff,
title={GeoDiff: A Geometric Diffusion Model for Molecular Conformation Generation},
author={Minkai Xu and Lantao Yu and Yang Song and Chence Shi and Stefano Ermon and Jian Tang},
booktitle={International Conference on Learning Representations},
year={2022},
url={https://openreview.net/forum?id=PzcvxEMzvQC}
}

@inproceedings{frank2025sampling,
  title     = {Sampling 3D Molecular Conformers with Diffusion Transformers},
  author    = {Frank, J. Thorben and Ripken, Winfried and Lied, Gregor and M\"uller, Klaus-Robert and Unke, Oliver T. and Chmiela, Stefan},
  booktitle = {Advances in Neural Information Processing Systems (NeurIPS) 2025},
  year      = {2025},
  note      = {Poster},
  url       = {https://neurips.cc/virtual/2025/poster/115555}
}

@inproceedings{
hassan2024etflow,
title={{ET}-Flow: Equivariant Flow-Matching for Molecular Conformer Generation},
author={Majdi Hassan and Nikhil Shenoy and Jungyoon Lee and Hannes Stark and Stephan Thaler and Dominique Beaini},
booktitle={The Thirty-eighth Annual Conference on Neural Information Processing Systems},
year={2024},
url={https://openreview.net/forum?id=avsZ9OlR60}
}

@article{romero2022learning,
  title={Learning partial equivariances from data},
  author={Romero, David W and Lohit, Suhas},
  journal={Advances in Neural Information Processing Systems},
  volume={35},
  pages={36466--36478},
  year={2022}
}

@article{batzner20223,
  title={E (3)-equivariant graph neural networks for data-efficient and accurate interatomic potentials},
  author={Batzner, Simon and Musaelian, Albert and Sun, Lixin and Geiger, Mario and Mailoa, Jonathan P and Kornbluth, Mordechai and Molinari, Nicola and Smidt, Tess E and Kozinsky, Boris},
  journal={Nature communications},
  volume={13},
  number={1},
  pages={2453},
  year={2022},
  publisher={Nature Publishing Group UK London}
}

@article{bronstein2021geometric,
  title={Geometric deep learning: Grids, groups, graphs, geodesics, and gauges},
  author={Bronstein, Michael M and Bruna, Joan and Cohen, Taco and Veli{\v{c}}kovi{\'c}, Petar},
  journal={arXiv preprint arXiv:2104.13478},
  year={2021}
}

@InProceedings{pmlr-v80-kipf18a,
  title = 	 {Neural Relational Inference for Interacting Systems},
  author =       {Kipf, Thomas and Fetaya, Ethan and Wang, Kuan-Chieh and Welling, Max and Zemel, Richard},
  booktitle = 	 {Proceedings of the 35th International Conference on Machine Learning},
  pages = 	 {2688--2697},
  year = 	 {2018},
  volume = 	 {80},
  series = 	 {Proceedings of Machine Learning Research},
  month = 	 {10--15 Jul},
  publisher =    {PMLR},
  url = 	 {https://proceedings.mlr.press/v80/kipf18a.html},
}

@article{kohler2019equivariant,
  title={Equivariant flows: sampling configurations for multi-body systems with symmetric energies},
  author={K{\"o}hler, Jonas and Klein, Leon and No{\'e}, Frank},
  journal={arXiv preprint arXiv:1910.00753},
  year={2019}
}

@article{brandstetter2021geometric,
      title={Geometric and Physical Quantities improve E(3) Equivariant Message Passing},
      author={Johannes Brandstetter and Rob Hesselink and Elise van der Pol and Erik Bekkers and Max Welling},
      year={2021},
      eprint={2110.02905},
      archivePrefix={arXiv},
      primaryClass={cs.LG}
}

@InProceedings{Mar+2019c,
  author    = {Haggai Maron and Heli Ben{-}Hamu and Nadav Shamir and Yaron Lipman},
  booktitle = {International Conference on Learning Representations},
  title     = {Invariant and Equivariant Graph Networks},
  year      = {2019},
}

@Article{Jum+2021,
  author    = {Jumper, John and Evans, Richard and Pritzel, Alexander and Green, Tim and Figurnov, Michael and Ronneberger, Olaf and Tunyasuvunakool, Kathryn and Bates, Russ and {\v{Z}}{\'\i}dek, Augustin and Potapenko, Anna and others},
  journal   = {Nature},
  title     = {Highly accurate protein structure prediction with AlphaFold},
  year      = {2021},
  number    = {7873},
  pages     = {583--589},
  volume    = {596},
  publisher = {Nature Publishing Group},
}

@InProceedings{Esteves_2018_ECCV,
author = {Esteves, Carlos and Allen-Blanchette, Christine and Makadia, Ameesh and Daniilidis, Kostas},
title = {Learning SO(3) Equivariant Representations with Spherical CNNs},
booktitle = {The European Conference on Computer Vision (ECCV)},
month = {September},
year = {2018}
}

@inproceedings{WeilerHS18,
  title={Learning steerable filters for rotation equivariant cnns},
  author={Weiler, Maurice and Hamprecht, Fred A and Storath, Martin},
  booktitle={Proceedings of the IEEE Conference on Computer Vision and Pattern Recognition},
  pages={849--858},
  year={2018}
}

@inproceedings{Cohen16GroupEqui,
author = {Cohen, Taco S. and Welling, Max},
title = {Group equivariant convolutional networks},
year = {2016},
publisher = {JMLR.org},
booktitle = {Proceedings of the 33rd International Conference on International Conference on Machine Learning - Volume 48},
pages = {2990–2999},
numpages = {10},
location = {New York, NY, USA},
series = {ICML'16}
}

@InProceedings{pmlr-v80-kondor18a,
  title = 	 {On the Generalization of Equivariance and Convolution in Neural Networks to the Action of Compact Groups},
  author =       {Kondor, Risi and Trivedi, Shubhendu},
  booktitle = 	 {Proceedings of the 35th International Conference on Machine Learning},
  pages = 	 {2747--2755},
  year = 	 {2018},
  volume = 	 {80},
  series = 	 {Proceedings of Machine Learning Research},
  month = 	 {10--15 Jul},
  publisher =    {PMLR},
  url = 	 {https://proceedings.mlr.press/v80/kondor18a.html}
}

@article{lecun1989backpropagation,
  title={Backpropagation applied to handwritten zip code recognition},
  author={LeCun, Yann and Boser, Bernhard and Denker, John S and Henderson, Donnie and Howard, Richard E and Hubbard, Wayne and Jackel, Lawrence D},
  journal={Neural computation},
  volume={1},
  number={4},
  pages={541--551},
  year={1989},
  publisher={MIT Press}
}

@article{fukushima1980neocognitron,
  title={Neocognitron: A self-organizing neural network model for a mechanism of pattern recognition unaffected by shift in position},
  author={Fukushima, Kunihiko},
  journal={Biological cybernetics},
  volume={36},
  number={4},
  pages={193--202},
  year={1980},
  publisher={Springer}
}

@inproceedings{
xie2025a,
title={A Tale of Two Symmetries: Exploring the Loss Landscape of Equivariant Models},
author={YuQing Xie and Tess Smidt},
booktitle={The Thirty-ninth Annual Conference on Neural Information Processing Systems},
year={2025},
url={https://openreview.net/forum?id=rH4aGTL4jY}
}

@article{Thomas2018TensorFN,
  title={Tensor Field Networks: Rotation- and Translation-Equivariant Neural Networks for 3D Point Clouds},
  author={Nathaniel Thomas and Tess E. Smidt and Steven M. Kearnes and Lusann Yang and Li Li and Kai Kohlhoff and Patrick F. Riley},
  journal={ArXiv},
  year={2018},
  volume={abs/1802.08219},
  url={https://api.semanticscholar.org/CorpusID:3457605}
}

@inproceedings{
petrache2023approximationgeneralization,
title={Approximation-Generalization Trade-offs under (Approximate) Group Equivariance},
author={Mircea Petrache and Shubhendu Trivedi},
booktitle={Thirty-seventh Conference on Neural Information Processing Systems},
year={2023},
url={https://openreview.net/forum?id=DnO6LTQ77U}
}

@inproceedings{approxequiv,
  title={Approximately equivariant networks for imperfectly symmetric dynamics},
  author={Wang, Rui and Walters, Robin and Yu, Rose},
  booktitle={International Conference on Machine Learning},
  pages={23078--23091},
  year={2022},
  organization={PMLR}
}

@article{berndt2026approximate,
  title={Approximate equivariance via projection-based regularisation},
  author={Berndt, Torben and St{\"u}hmer, Jan},
  journal={arXiv preprint arXiv:2601.05028},
  year={2026}
}

@inproceedings{
ashman2024approximately,
title={Approximately Equivariant Neural Processes},
author={Matthew Ashman and Cristiana Diaconu and Adrian Weller and Wessel P Bruinsma and Richard E. Turner},
booktitle={The Thirty-eighth Annual Conference on Neural Information Processing Systems},
year={2024},
url={https://openreview.net/forum?id=dqT9MC5NQl}
}

@inproceedings{PennPaper,
 author = {Pertigkiozoglou, Stefanos and Chatzipantazis, Evangelos and Trivedi, Shubhendu and Daniilidis, Kostas},
 booktitle = {Advances in Neural Information Processing Systems},
 pages = {83497--83520},
 title = {Improving Equivariant Model Training via Constraint Relaxation},
 volume = {37},
 year = {2024}
}

@article{petrache2024position,
  title={Position Paper: Generalized grammar rules and structure-based generalization beyond classical equivariance for lexical tasks and transduction},
  author={Petrache, Mircea and Trivedi, Shubhendu},
  journal={arXiv preprint arXiv:2402.01629},
  year={2024}
}

@inproceedings{
chatzipantazis2023mathrmseequivariant,
title={SE(3)-Equivariant Attention Networks for Shape Reconstruction in Function Space},
author={Evangelos Chatzipantazis and Stefanos Pertigkiozoglou and Edgar Dobriban and Kostas Daniilidis},
booktitle={The Eleventh International Conference on Learning Representations },
year={2023},
url={https://openreview.net/forum?id=RDy3IbvjMqT}
}

@misc{hoogeboom2022equivariantdiffusionmoleculegeneration,
      title={Equivariant Diffusion for Molecule Generation in 3D},
      author={Emiel Hoogeboom and Victor Garcia Satorras and Clément Vignac and Max Welling},
      booktitle = {Proceedings of the 39th International Conference on Machine Learning},
      year={2022},
      eprint={2203.17003},
      archivePrefix={arXiv},
      primaryClass={cs.LG},
      url={https://arxiv.org/abs/2203.17003},
}

@article{xu2024equivariant,
  title={Equivariant Graph Neural Operator for Modeling 3D Dynamics},
  author={Xu, Minkai and Han, Jiaqi and Lou, Aaron and Kossaifi, Jean and Ramanathan, Arvind and Azizzadenesheli, Kamyar and Leskovec, Jure and Ermon, Stefano and Anandkumar, Anima},
  journal={arXiv preprint arXiv:2401.11037},
  year={2024}
}

@inproceedings{deng2021vn,
  title={Vector neurons: A general framework for so (3)-equivariant networks},
  author={Deng, Congyue and Litany, Or and Duan, Yueqi and Poulenard, Adrien and Tagliasacchi, Andrea and Guibas, Leonidas J},
  booktitle={Proceedings of the IEEE/CVF International Conference on Computer Vision},
  pages={12200--12209},
  year={2021}
}

@misc{CMU2003,
  author = {{CMU}},
  title = {Carnegie-Mellon Motion Capture Database},
  year = {2003},
  url = {http://mocap.cs.cmu.edu},
  note = {NSF Grant \#0196217}
}

@article{axelrod2022geom,
  title={GEOM, energy-annotated molecular conformations for property prediction and molecular generation},
  author={Axelrod, Simon and Gomez-Bombarelli, Rafael},
  journal={Scientific Data},
  volume={9},
  number={1},
  pages={185},
  year={2022},
  publisher={Nature Publishing Group UK London}
}

@article{shapenet2015,
  title={Shapenet: An information-rich 3d model repository},
  author={Chang, Angel X and Funkhouser, Thomas and Guibas, Leonidas and Hanrahan, Pat and Huang, Qixing and Li, Zimo and Savarese, Silvio and Savva, Manolis and Song, Shuran and Su, Hao and others},
  journal={arXiv preprint arXiv:1512.03012},
  year={2015}
}

@inbook{grenander1963stochastic,
  author    = {Grenander, Ulf},
  title = {Probabilities on Algebraic Structures},
  publisher = {John Wiley \& Sons},
  year      = {1963},
  pages     = {55--63},
  address   = {New York},
  chapter   = {3}
}

@inproceedings{loshchilov2017sgdr,
  title={{SGDR}: Stochastic Gradient Descent with Warm Restarts},
  author={Loshchilov, Ilya and Hutter, Frank},
  booktitle={International Conference on Learning Representations (ICLR)},
  year={2017}
}

@article{FuchsSE3TF,
  author       = {Fabian B. Fuchs and
                  Daniel E. Worrall and
                  Volker Fischer and
                  Max Welling},
  title        = {SE(3)-Transformers: 3D Roto-Translation Equivariant Attention Networks},
  journal      = {CoRR},
  volume       = {abs/2006.10503},
  year         = {2020},
  url          = {https://arxiv.org/abs/2006.10503},
  eprinttype    = {arXiv},
  eprint       = {2006.10503},
  bibsource    = {dblp computer science bibliography, https://dblp.org}
}

@article{jing2022torsional,
  title={Torsional diffusion for molecular conformer generation},
  author={Jing, Bowen and Corso, Gabriele and Chang, Jeffrey and Barzilay, Regina and Jaakkola, Tommi},
  journal={Advances in neural information processing systems},
  volume={35},
  pages={24240--24253},
  year={2022}
}

@inproceedings{ResidPathwayPriors,
 author = {Finzi, Marc and Benton, Gregory and Wilson, Andrew G},
 booktitle = {Advances in Neural Information Processing Systems},
 title = {Residual Pathway Priors for Soft Equivariance Constraints},
 volume = {34},
 year = {2021}
}

@inproceedings{
gruver2023the,
title={The Lie Derivative for Measuring Learned Equivariance},
author={Nate Gruver and Marc Anton Finzi and Micah Goldblum and Andrew Gordon Wilson},
booktitle={The Eleventh International Conference on Learning Representations },
year={2023},
url={https://openreview.net/forum?id=JL7Va5Vy15J}
}

@article{weiler20183d,
  title={3d steerable cnns: Learning rotationally equivariant features in volumetric data},
  author={Weiler, Maurice and Geiger, Mario and Welling, Max and Boomsma, Wouter and Cohen, Taco S},
  journal={Advances in Neural information processing systems},
  volume={31},
  year={2018}
}

@inproceedings{
manolache2025learning,
title={Learning (Approximately) Equivariant Networks via Constrained Optimization},
author={Andrei Manolache and Luiz F. O. Chamon and Mathias Niepert},
booktitle={The Thirty-ninth Annual Conference on Neural Information Processing Systems},
year={2025},
url={https://openreview.net/forum?id=NM4emKloy6}
}

@article{kawada1940probability,
  title={On the probability distribution on a compact group. I},
  author={Kawada, Yukiyosi and It{\^o}, Kiyosi},
  journal={Proceedings of the Physico-Mathematical Society of Japan. 3rd Series},
  volume={22},
  number={12},
  pages={977--998},
  year={1940},
  publisher={THE PHYSICAL SOCIETY OF JAPAN, The Mathematical Society of Japan}
}

@inproceedings{Bekkers20,
  author       = {Erik J. Bekkers},
  title        = {B-Spline CNNs on Lie groups},
  booktitle    = {8th International Conference on Learning Representations, {ICLR} 2020,
                  Addis Ababa, Ethiopia, April 26-30, 2020},
  publisher    = {OpenReview.net},
  year         = {2020},
  url          = {https://openreview.net/forum?id=H1gBhkBFDH}
}

@inproceedings{fukushima1979self,
  title={Self-organization of a neural network which gives position-invariant response},
  author={Fukushima, Kunihiko},
  booktitle={Proceedings of the 6th international joint conference on Artificial intelligence-Volume 1},
  pages={291--293},
  year={1979}
}

@article{hubel1962receptive,
  title={Receptive fields, binocular interaction and functional architecture in the cat's visual cortex},
  author={Hubel, David H and Wiesel, Torsten N},
  journal={The Journal of physiology},
  volume={160},
  number={1},
  pages={106},
  year={1962}
}

@article{hubel1977ferrier,
  title={Ferrier lecture-Functional architecture of macaque monkey visual cortex},
  author={Hubel, David Hunter and Wiesel, Torsten Nils},
  journal={Proceedings of the Royal Society of London. Series B. Biological Sciences},
  volume={198},
  number={1130},
  pages={1--59},
  year={1977},
  publisher={The Royal Society London}
}

@book{MinskyPapert,
  title     = {Perceptrons, Expanded Edition: An Introduction to Computational Geometry},
  author    = {Marvin Minsky and Seymour Papert},
  year      = {1987},
  publisher = {The MIT Press},
  address   = {Cambridge, MA}
}

@INPROCEEDINGS{51951,
  author={Shawe-Taylor, John.},
  booktitle={1989 First IEE International Conference on Artificial Neural Networks, (Conf. Publ. No. 313)}, 
  title={Building symmetries into feedforward networks}, 
  year={1989},
  volume={},
  number={},
  pages={158-162},
  doi={}}

@techreport{soton250470,
    title = {Theory of symmetry network structure},
    author = {Jeffrey Wood and John Shawe-Taylor},
    publisher = {University of Southampton},
    year = {1993},
    url = {https://eprints.soton.ac.uk/250470/}
}

@article{248459,
  author={Shawe-Taylor, John},
  journal={IEEE Transactions on Neural Networks}, 
  title={Symmetries and discriminability in feedforward network architectures}, 
  year={1993},
  volume={4},
  number={5},
  pages={816-826},
  doi={10.1109/72.248459}
}

@inproceedings{374187,
  author={Shawe-Taylor, John},
  booktitle={Proceedings of 1994 IEEE International Conference on Neural Networks (ICNN'94)}, 
  title={Introducing invariance: a principled approach to weight sharing}, 
  year={1994},
  volume={1},
  number={},
  pages={345-349 vol.1},
  doi={10.1109/ICNN.1994.374187}
}

@article{cohen2019general,
  title={A general theory of equivariant cnns on homogeneous spaces},
  author={Cohen, Taco S and Geiger, Mario and Weiler, Maurice},
  journal={Advances in neural information processing systems},
  volume={32},
  year={2019}
}

@book{weiler2024equivariant,
  title={Equivariant and coordinate independent convolutional networks: A gauge field theory of neural networks},
  author={Weiler, Maurice and Forr{\'e}, Patrick and Verlinde, Erik and Welling, Max},
  year={2024},
  publisher={World Scientific}
}
\bibliographystyle{icml2026}

\newpage
\appendix
\onecolumn
\section{Proof of main results}\label{sec:append}
\textbf{Lemma 4.1 (Convergence of $h_t$)} Assume $(z_t,y_t)$ are independent random samples from a distribution with density $p_t$, where $p_t$ converges in $1$-Wasserstein distance to $p$. Also assume that $l_{\theta_t}$ is bounded, Lipschitz continuous and converges uniformly to $l^*$. Then:
\begin{align*}
    h_t\underset{a.s.}{\to}E_{(z,y)\sim p}[l^*(z,y)].
\end{align*}

\begin{proof}\label{proof:l1}
We can define the random variable $Y_t=l_{\theta_t}(z_t,y_t
)$, and its average over timesteps $k\leq t$ as $\bar{Y_t}=\frac{1}{t}\sum_{k=1}^t Y_k$. Using the recursion we get:
\begin{align*}
    h_t&=\frac{b+a(t-1)}{b+at}h_{t-1}+\frac{a}{b+at}Y_t\\
    &=\frac{b}{b+at}h_0+\frac{at}{b+at}\left(\frac{1}{t}\sum_{k=1}^t Y_k\right)\\
    &=\lambda_t h_0+(1-\lambda_t)\bar{Y_t},
\end{align*}
with $\lambda_t=\frac{b}{b+at}$. Now define $Z_t=l^*(z_t,y_t)$, then we have that:
\begin{align*}
    \left|\frac{1}{t}\sum_{k=1}^t(Y_k-Z_k)\right|\leq \frac{1}{t}\sum_{k=1}^t |Y_k - Z_k|,
\end{align*}
and the above quantity tends to zero a.s. as $t\to\infty$, since $|Y_k - Z_k|\leq \|l_{\theta_k} - l^*\|_\infty$, which is assumed to tend to zero.

Additionally define $\mu_k:=\mathbb{E}_{(z,y)\sim p_k}[l^*(z,y)]$ and $\mu:=\mathbb{E}_{(z,y)\sim p}[l^*(z,y)]$. Since $l^*$ is Lipschitz bounded, calling its Lipschitz constant $L$, we have that:
\begin{align*}
    \left | \mathbb{E}_{(z,y)\sim p_k}[l^*/L]-\mathbb{E}_{(z,y)\sim p}[l^*/L]\right|&\leq W_1(p,p_k)\implies \\
    \left| \mu_k-\mu\right|&\leq L W_1(p,p_k)\implies\\
    \left|\frac{1}{t}\sum_{k=1}^t \mu_k-\mu\right|&\underset{t\to \infty}{\rightarrow} 0 \quad \left(\text{since }\lim_{t\to\infty}\frac{1}{t}\sum_{k=1}^tW_1(p_k,p)=\lim_{t\to\infty} W_1(p_t,p)=0\right).
\end{align*}
Finally, for the random variables $(Z_k-\mu_k)$ for different $k\geq 1$, we know that they are independent with zero mean. Additionally, since $l$ is bounded we have that $\lVert l^*\rVert_\infty\leq M$, and thus the absolute value of $Z_k$ and $\mu_k$ is also bounded by $M$. This means that the variance of $(Z_k-\mu_k)$ is:
\begin{align*}
    \mathrm{Var}(Z_k-\mu_k)\leq \mathbb{E}_{p_k}\left[(Z_k-\mu_k)^2\right]\leq 4M^2.
\end{align*}
Since $\mathbb{E}_{p_k}(Z_k-\mu_k)=0$ for all $k$ and  $\sum_{k=1}^\infty \mathrm{Var}(Z_k-\mu_k)/k^2<\infty$ we have that:
\begin{align*}
    \frac{1}{t}\sum_{k=1}^t (Z_k-\mu_k)\underset{t\to \infty}{\rightarrow} 0,
\end{align*}
which implies:
\begin{align*}
    \frac{1}{t}\sum_{k=1}^t Z_k -\mu=\frac{1}{t}\sum_{k=1}^t(Z_k-\mu_k)+\frac{1}{t}\sum_{k=1}^t(\mu_k-\mu)\underset{t\to\infty}{\rightarrow }0\implies
    \frac{1}{t}\sum_{k=1}^tZ_k\underset{t\to \infty}{\rightarrow}\mu=\mathbb{E}_{(z,y)\sim p}\left[l^*(z,y)\right].
\end{align*}
Combining all of the above, we have:
\begin{align*}
    \bar{Y}_t=\frac{1}{t}\sum_{k=1}^t(Y_k-Z_k)+\frac{1}{t}\sum_{k=1}^tZ_k\underset{t\to \infty}{\rightarrow}\mathbb{E}_{(z,y)\sim p}[l^*(z,y)].
\end{align*}
With this and the fact that $\lim_{t\to \infty}\lambda_t=\lim_{t\to\infty}\frac{b}{b+at}=0$ allow us to finalize the proof:
\begin{align*}
    h_t=\lambda_th_0+(1-\lambda_t)\bar{Y}_t\underset{t\to \infty}{\rightarrow}0 h_0+1 \mathbb{E}_{(z,y)\sim p}[l^*(z,y)]=\mathbb{E}_{(z,y)\sim p}[l^*(z,y)].
\end{align*}
\end{proof}
\textbf{Theorem 4.2 } Let $p$ be a probability distribution on metric space $Q = Z \times Y$, let $G$ be a compact group with Haar measure $\lambda_G$ acting on $Q$ by continuous unitary representations $\rho_{\mathrm{in}},\rho_{\mathrm{out}}$ i.e. $T_g(z,y) = (\rho_{\mathrm{in}}(g)z,\rho_{\mathrm{out}}(g)y)$, under which measure $p$ is preserved and let $C_G\subset G$ be a finite topological generating set of $G$. Define:
\begin{align*}
 p_{C_G}=\frac{1}{\left|C_G\right|}\sum_{g\in C_G} p(T_g^{-1} z),\quad  p_G=\int_G p(T_g^{-1}z)d\lambda_G(g).
\end{align*}
Then if $l_\theta$ defined in \eqref{ltheta} with $r_\theta:Q\to\mathbb R$ a $B$-Lipschitz function, we have that
\[\left|\mathbb{E}_{(z,y)\sim p}[l_\theta(z,y)]\right|\leq 2B W_1(p,p_{G}),\]
where $W_1$ is the $1$-Wasserstein distance between probability measures. Additionally, if $\{r_\theta\}_{\theta\in \Theta}$ is a family of universal approximators of all bounded continuous functions with Lipschitz constant less than or equal to a $C>0$, then for any $C'\in(0,C)$ there exists $\theta^*\in\Theta$ for which $\mathbb{E}_{(z,y)\sim p}[l_{\theta^*}(z,y)]\geq C' W_1(p,p_{C_G})$.
\\
\begin{proof} \label{proof:l2}
For the first part of the proof, since $\rho_{\mathrm{in}}, \rho_{\mathrm{out}}$ are 
unitary representations and $l_\theta$ is a $B$-Lipschitz function we have from the definition of $l_\theta$ that:

\begin{align*}
    \lVert l_\theta\rVert_\mathrm{Lip}\leq \lVert r_\theta \rVert_\mathrm{Lip}+\frac{1}{|C_G|}\sum_{g\in C_G}\lVert r_\theta\circ T_g\rVert_\mathrm{Lip}=2B.
\end{align*}
Also the expected value of $r_\theta$ over the symmetrized distribution is:
\begin{align*}
    \mathbb{E}_{q\sim p_G}[l_\theta(q)]=\mathbb{E}_{q\sim p_G}[r_\theta(q)]-\frac{1}{|C_G|}\sum_{g\in C_G}\mathbb{E}_{q\sim p_G}[r_\theta(T_gq)]=0.
\end{align*}
Since the function $f=\frac{1}{(2B)}l_\theta$ has Lipschitz constant less or equal to 1, from the Kantorovich-Rubinstein duality for the $1$-Wasserstein distance we have that:
\begin{align*}
    |\mathbb{E}_{q\sim p}[l_\theta(q)]|=&2B\left|\mathbb{E}_{q\sim p}\left[\frac{l_\theta(q)}{2B}\right]-\mathbb{E}_{q\sim p_G}\left[\frac{l_\theta(q)}{2B}\right]\right|\\ \leq& 2B\underset{\lVert f\rVert_\mathrm{Lip}}{\sup}\left(E_{q\sim p}[f]-E_{q\sim p_G}[f]\right)\\
    \leq& 2B\ W_1(p,p_G).
\end{align*}
For the second part of the proof we compute:
\begin{align*}
    \mathbb{E}_{q\sim p}[l_\theta(q)]&=\mathbb{E}_{q\sim p}[r_\theta(q)]-\frac{1}{|C_G|}\sum_{g\in C_G}\mathbb{E}_{q\sim p}[r_\theta(T_gq)],
\end{align*}
where we can write the second term as:
\begin{align*}
    \frac{1}{|C_G|}\sum_{g\in C_G}\mathbb{E}_{q\sim p}[r_\theta(T_gq)]&=\frac{1}{|C_G|}\sum_{g\in C_G}\int r_\theta(T_gq)p(q)dq \\
    &=\frac{1}{|C_G|}\sum_{g\in C_G}\int r_\theta(u)p(T_{g^{-1}})du\\
    &=\int r_\theta(u)\left(\frac{1}{\left|C_G\right|}\sum_{g\in C_G} p\left(T_{g^{-1}}u\right)\right)du\\
    &=E_{q\sim p_{C_G}}[r_\theta(q)],
\end{align*}
where we used the change of variable $u=T_gq$ and the fact that the group action preserves measure $p$. As a result, the overall expectation can be written as
\begin{align*}
    \mathbb{E}_{q\sim p}[l_\theta(q)]&=\mathbb{E}_{q\sim p}[r_\theta(q)]-\mathbb{E}_{q\sim p_{C_G}}[r_\theta(q)].
\end{align*}
If $W_1(p,p_{C_G})=0$ then the second statement of the theorem follows from the first one, so we pass to the case $W_1(p,p_{C_G})>0$.

Similarly to before we have that by Kantorovich duality:
\begin{align*}
    &\underset{\lVert f\rVert_\mathrm{Lip}\leq 1}{\sup}\left(\mathbb{E}_{q\sim p}[f(q)]-\mathbb{E}_{q\sim p_{C_G}}[f(q)]\right)=W_1(p,p_{C_G})\\
    \implies& \exists f:\ \lVert f\rVert_\mathrm{Lip}\leq 1 \text{ and }\mathbb{E}_{q\sim p}[f(q)]-\mathbb{E}_{q\sim p_{C_G}}[f(q)]\geq W_1(p,p_{C_G}).
\end{align*}
As $\{r_\theta\}_{\theta\in\Theta}$ are universal approximators of functions with Lipschitz constant less or equal to $C$, for every $f$  with $ \lVert f\rVert_\mathrm{Lip}\leq 1$ and for all $\epsilon>0$ there exists $\theta^*$ such that $\|f-C^{-1}r_{\theta^*}\|_\infty<\epsilon/2$. Which implies that there exists $\theta^*$ such that:
\begin{align*}
    \mathbb{E}_{q\sim p}[C^{-1}r_{\theta^*}(q)]-\mathbb{E}_{q\sim p_{C_G}}[C^{-1}r_{\theta^*}(q)]\geq W_1(p,p_{C_G}) - \epsilon\implies \mathbb{E}_{q\sim p}[l_{\theta^*}(q)]\geq C W_1(p,p_{C_G})- C\epsilon.
\end{align*}
For $\epsilon<\frac{C-C'}{C}W_1(p,p_{C_G})$ it follows that $\mathbb{E}_{q\sim p}[l_{\theta^*}(q)]> C' W_1(p,p_{C_G})$, as desired.
\end{proof}

\textbf{Lemma 4.3} Let $p$ be a probability measure over $Q=Z\times Y$, let $C_G$ be a finite topologically generating set of compact group $G$. For an action $T_g(z,y)=(\rho_{in}(g)z,\rho_{out}(g)y)$ by continuous representations $\rho_{\mathrm{in}},\rho_{\mathrm{out}}$ on $Q=Z\times Y$, define $p_{C_G}:=\frac{1}{|C_G|}\sum_{g\in C_G}p\circ T_g^{-1}$. Then the following holds
\begin{align*}
    p=p_{C_G} \iff p=p\circ T_g\text{ for all } g\in G.
\end{align*}

\begin{proof}\label{proof:l3}
    For the backward direction ($\Leftarrow$) it suffices to note that:
    \begin{align*}
        \forall g\in C_G,\ p=p\circ T_{g^{-1}}\implies p_{C_G}=\frac{1}{|C_G|}\sum_{g\in C_G}p\circ T_g^{-1}=p.
    \end{align*}
 
For the forward direction ($\implies$) we define the uniform probability measure on the finite set $C_G\subset G$:
    \begin{align*}
        \mu=\frac{1}{|C_G|}\sum_{g\in C_G}\delta_g
    \end{align*}
    and also define the convolution on a probability measure $\sigma$ on G as:
    \begin{align*}
        (p*\sigma)(q)=\int_G p(T_{g^{-1}}q)d\sigma(g).
    \end{align*}
    Then we have
    \begin{align*}
        (p*\mu)(q)=\frac{1}{|C_G|}\sum_{g\in C_G}p(T_{g^{-1}}q)=p_{C_G}(q).
    \end{align*}
    Applying the convolution with $\mu$ multiple times in the above result, we get that:
    \begin{align*}
        p=p_{C_G}=p*\mu \implies p=p*\mu^{*n},\quad \forall n\geq 1
    \end{align*}
    Then setting
    \begin{align*}
        \overline{\mu}_n:=\frac{1}{n}\sum_{k=0}^{n-1}\mu^{*k},
    \end{align*}
    we have that:
    \begin{align*}
        p*\overline{\mu}_n=\frac{1}{n}\sum_{k=0}^{n-1}p*\mu^{*k}=\frac{1}{n}\sum_{k=0}^{n-1}p=p.
    \end{align*}
Now since the support of $\mu$ generates a dense subgroup of the compact group $G$ ($\mu$ is adapted), we can use the Kawada-It\^{o} theorem \citep{kawada1940probability,grenander1963stochastic}  to conclude that $\overline{\mu}_n$ converges weakly to the Haar measure $\lambda_G$ on G:
    \[\overline{\mu}_n\overset{n\to\infty}{\longrightarrow}\lambda_G.\]
    This weak convergence implies that for every bounded continuous function $F:G\to \mathbb{R}$:
    \begin{align*}
        \int_G F(g)d\bar{\mu_n}(g)\longrightarrow \int_G F(g)d\lambda_G(g),
    \end{align*}
    so for every bounded continuous function $f:Q\to \mathbb{R}$ we can define function:
    \begin{align*}
        F(g):=\int_Q f(q)p(T_{g^{-1}}q)dq,
    \end{align*}
    which, since $f$ is bounded and $p$ is a probability distribution, it is also bounded, and because the action $T_g$ is continuous, it is also continuous. Using the above $F$ we have
    \begin{align*}
        \int_G F(g)d\overline{\mu}_n(g)=&\int_G \int_Q f(q)p(T_{g^{-1}}q)dqd\overline{\mu}_n(g)\\
        &=\int_Q f(q)\int_G p(T_{g^{-1}}q)d\overline{\mu}_n(g)dq\\
        &=\int_Q f(q)(p*\overline{\mu}_n)(q)dq
    \end{align*}
    and similarly
    \begin{align*}
        \int_G F(g)d\lambda_G(g)=\int_Q f(q)(p*\lambda_G)(q)dq,
    \end{align*}
    which means that using the weak convergence of $\overline{\mu}_n$ we can get
    \begin{align*}
    \overline{\mu}_n\overset{n\to \infty}{\longrightarrow} \lambda_G&\implies \int_Q f(q)(p*\overline{\mu}_n)(q)dq \overset{n\to \infty}{\longrightarrow} \int_Q f(q)(p*\lambda_G)(q)dq \\
    &\implies p*\overline{\mu}_n\overset{n\to \infty}{\longrightarrow} p*\lambda_G.
    \end{align*}
    Now, since for every $n\geq 1$, $p=p*\overline{\mu}_n$ we have that $p=p*\lambda_G$. Then for any $g\in G$, using the Dirac mass$\delta_{g^{-1}}$ at $g^{-1}\in G$, we get
    \begin{align*}
        p\circ T_{g}=p*\delta_{g^{-1}}=(p*\lambda_G)*\delta_{g^{-1}}=p*(\lambda_G*\delta_{g^{-1}})=p*\lambda_G=p,
    \end{align*}
    which conclude the proof for the forward direction.
\end{proof}

\begin{lemma}\label{proof:exSO3}
    The set $C_{SO(3)}:=\{R_1,R_2\}$ is a topological generating set for $G=SO(3)$ if $R_1, R_2$ are rotations that do not commute and that do not generate a finite group. This is the case e.g. in the following cases:
    \begin{enumerate}
        \item $R_1,R_2$ are rotations around distinct axes, at least one of which has infinite order (i.e. has irrational angle).
        \item $R_1,R_2$ are the algebraic elements given below:
        \[
            R_1=\left(\begin{array}{ccc}1&0&0\\0&0&-1\\0&1&0\end{array}\right),\quad R_2=\left(\begin{array}{ccc}1/2&-\sqrt3/2&0\\\sqrt3/2&1/2&0\\0&0&1\end{array}\right).
        \]
    \end{enumerate}
\end{lemma}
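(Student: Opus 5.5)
The approach is to use the structure theory of closed subgroups of $\mathrm{SO}(3)$. Let $H:=\overline{\langle R_1,R_2\rangle}$. This is a closed subgroup of the compact Lie group $\mathrm{SO}(3)$, hence itself a compact Lie group by Cartan's closed subgroup theorem. Its identity component $H_0$ is a connected closed subgroup of $\mathrm{SO}(3)$. Its Lie algebra is a subalgebra of $\mathfrak{so}(3)\cong(\mathbb R^3,\times)$, and such subalgebras have dimension $0$, $1$ or $3$ (there is no $2$-dimensional subalgebra, since the cross product of two independent vectors leaves their span). So $H_0$ is trivial, a one-parameter group $\mathrm{SO}(2)_n$ of rotations about some axis $n$, or all of $\mathrm{SO}(3)$. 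I will rule out the first two cases.

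\emph{Case $H_0$ trivial.} Then $H$ is compact and discrete, hence finite. This contradicts the hypothesis that $R_1,R_2$ do not generate a finite group.

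\emph{Case $H_0=\mathrm{SO}(2)_n$.} $H$ normalizes $H_0$, so every element of $H$ maps the axis $\mathbb R n$ to itself. Hence $H\subset O(2)_n$, the stabilizer of the line $\mathbb Rn$. The elements of $O(2)_n$ are of two kinds: rotations about $n$, and rotations by $\pi$ about axes perpendicular to $n$.

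Here I expect the main obstacle. The clean hypothesis ``non-commuting and not generating a finite group'' is, as written, not quite sufficient. For example, an irrational rotation about $z$ together with the rotation by $\pi$ about $x$ do not commute and generate an infinite group, yet the closure of that group is $O(2)_z\neq\mathrm{SO}(3)$. So in the general statement and in item 1 I would add a hypothesis excluding a common invariant axis, e.g.\ that neither $R_i$ is a $\pi$-rotation about an axis orthogonal to the other's axis. Under that hypothesis the argument runs as follows. Two rotations about $n$ commute, so $R_1,R_2$ cannot both be rotations about $n$. Hence at least one of them, say $R_2$, is a $\pi$-flip with axis $\perp n$. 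In item 1, the infinite-order element (say $R_1$) cannot be a flip, since flips have order $2$, so its axis is $\pm n$. Then $R_2$ is a $\pi$-rotation about an axis orthogonal to the axis of $R_1$, which is precisely the excluded configuration.

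\emph{Item 2.} Here $R_1$ has order $4$ and $R_2$ has order $6$, so neither is a $\pi$-flip. By the dichotomy above, both would then have to be rotations about $n$. This is impossible, since their axes $x$ and $z$ are distinct (equivalently, $R_1R_2\neq R_2R_1$, which is a direct check). Hence the $O(2)$ case is excluded without any extra hypothesis.

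It remains to check that $\langle R_1,R_2\rangle$ is infinite, so that $H_0$ is nontrivial. I would compute $\operatorname{tr}(R_1R_2)=\tfrac12$, so the rotation angle $\theta$ of $R_1R_2$ satisfies $1+2\cos\theta=\tfrac12$, i.e.\ $\cos\theta=-\tfrac14$. By Niven's theorem, the only rational values of $\cos(q\pi)$ with $q\in\mathbb Q$ are $0,\pm\tfrac12,\pm1$. Hence $\theta/\pi$ is irrational and $R_1R_2$ has infinite order.

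In all cases the only remaining option is $H_0=\mathrm{SO}(3)$, i.e.\ $H=\mathrm{SO}(3)$, so $\{R_1,R_2\}$ is a topological generating set.
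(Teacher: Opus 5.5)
Your proposal is correct, and the problem you flag is a genuine error in the paper, not in your argument. The paper's proof takes the same top-level route (classification of closed subgroups of $\mathrm{SO}(3)$). However, it quotes that classification as ``finite, isomorphic to $\mathrm{SO}(2)$, or $\mathrm{SO}(3)$'' and concludes that any infinite non-abelian subgroup is dense. That list omits the line stabilizers $O(2)_n$, which are infinite, closed and non-abelian. This is exactly the case your identity-component/normalizer argument isolates.

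Your example is an irrational rotation about $z$ together with the $\pi$-rotation about $x$, which inverts it under conjugation. It satisfies both the general hypothesis and item 1, yet the closure of the generated group is $O(2)_z$. So the general claim and item 1 are false as written, and ``no common invariant line'' is the right repair.

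For item 2 your argument is complete and more explicit than the paper's ``verify directly''. Orders $4$ and $6$ exclude flips, and distinct axes exclude both lying in $\mathrm{SO}(2)_n$. Finally, $\mathrm{tr}(R_1R_2)=\tfrac12$ gives $\cos\theta=-\tfrac14$, an irrational angle by Niven. Equivalently, an element of order $6$ lies only in cyclic or dihedral subgroups, where all elements of order $>2$ share one axis. The set actually used in the experiments (rotations by $3/2$ rad about $x$ and $z$) also meets your corrected hypothesis, so the generating sets the paper relies on remain valid.

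Two small points. First, the concrete condition you offer as an example (neither $R_i$ is a $\pi$-rotation about an axis orthogonal to the other's) suffices only when some $R_i$ is not an involution, as in items 1 and 2. For the general statement it misses the case of two $\pi$-rotations about axes $u_1,u_2$ at angle $\phi$ with $\phi/\pi$ irrational. They do not commute, their product is a rotation by $2\phi$ about $u_1\times u_2$, and neither axis is orthogonal to the other. Yet both preserve $\mathbb{R}(u_1\times u_2)$, and the closure is $O(2)$. So state the hypothesis as ``no common invariant line'' itself, which excludes the $O(2)_n$ case at once (and already implies non-commutation).

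Second, the paper's $C_G^\star$ consists of positive words only. You should therefore remark that in a compact group the closed semigroup generated by a set is a group: if $g^{n_k}$ is a convergent subsequence with gaps at least $2$, then $g^{n_{k+1}-n_k-1}\to g^{-1}$. In item 2 this is trivial, since $R_1^{-1}=R_1^3$ and $R_2^{-1}=R_2^5$.
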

\begin{figure*}[t]
    \centering
    \begin{tabular}{@{}cc@{}}
        \includegraphics[width=0.45\linewidth]{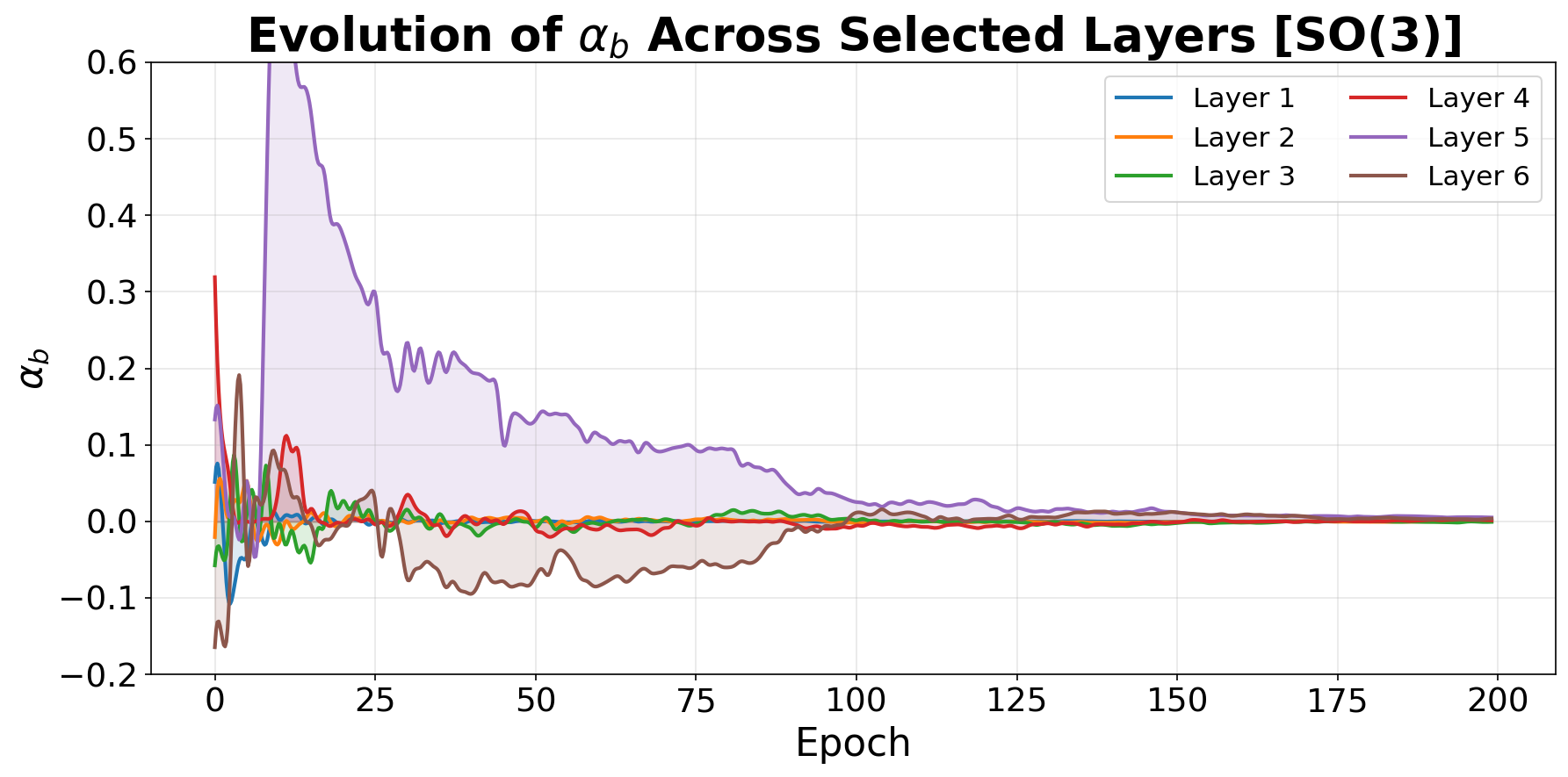} &
        \includegraphics[width=0.45\linewidth]{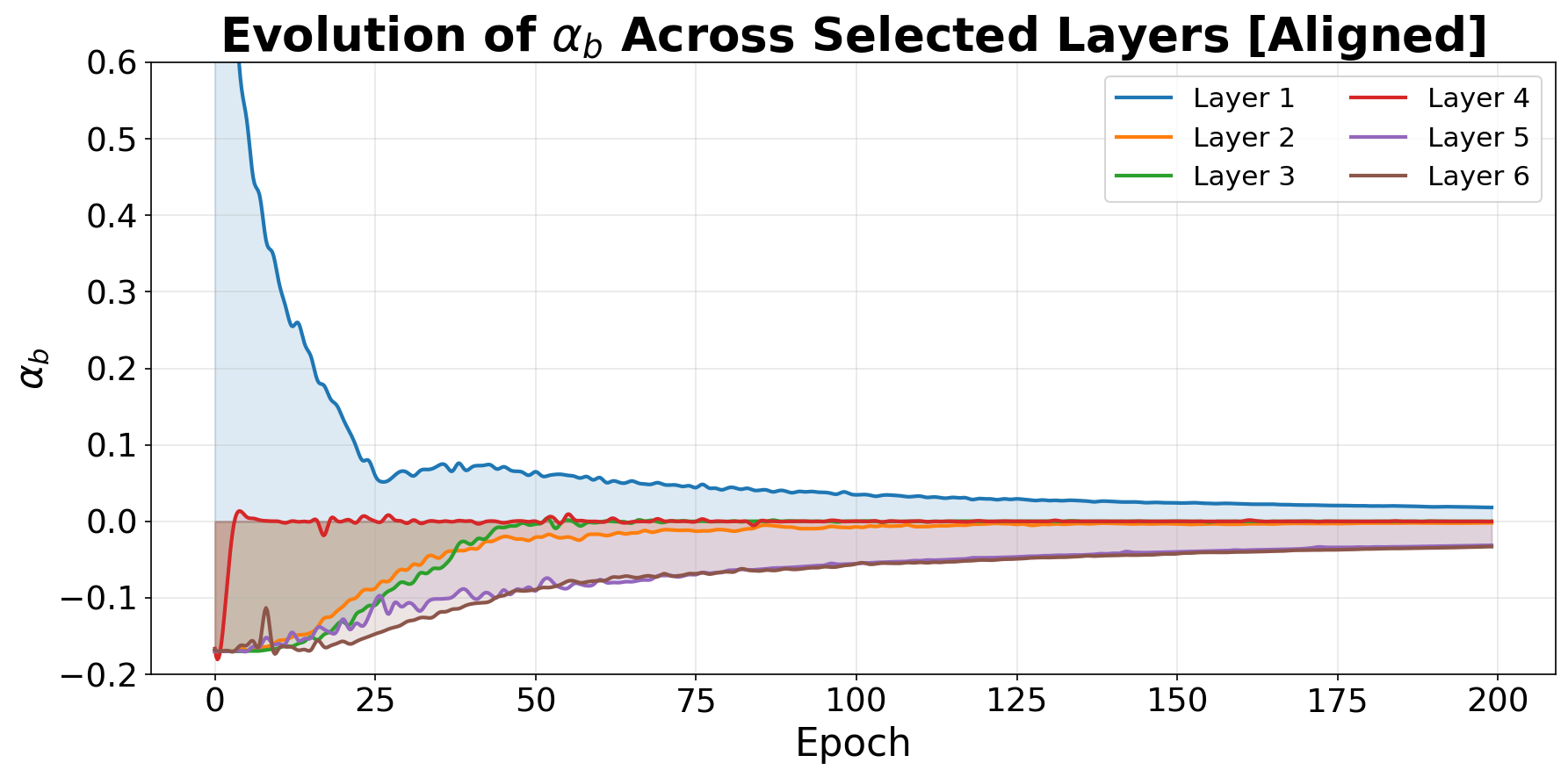}  \\
        \includegraphics[width=0.45\linewidth]{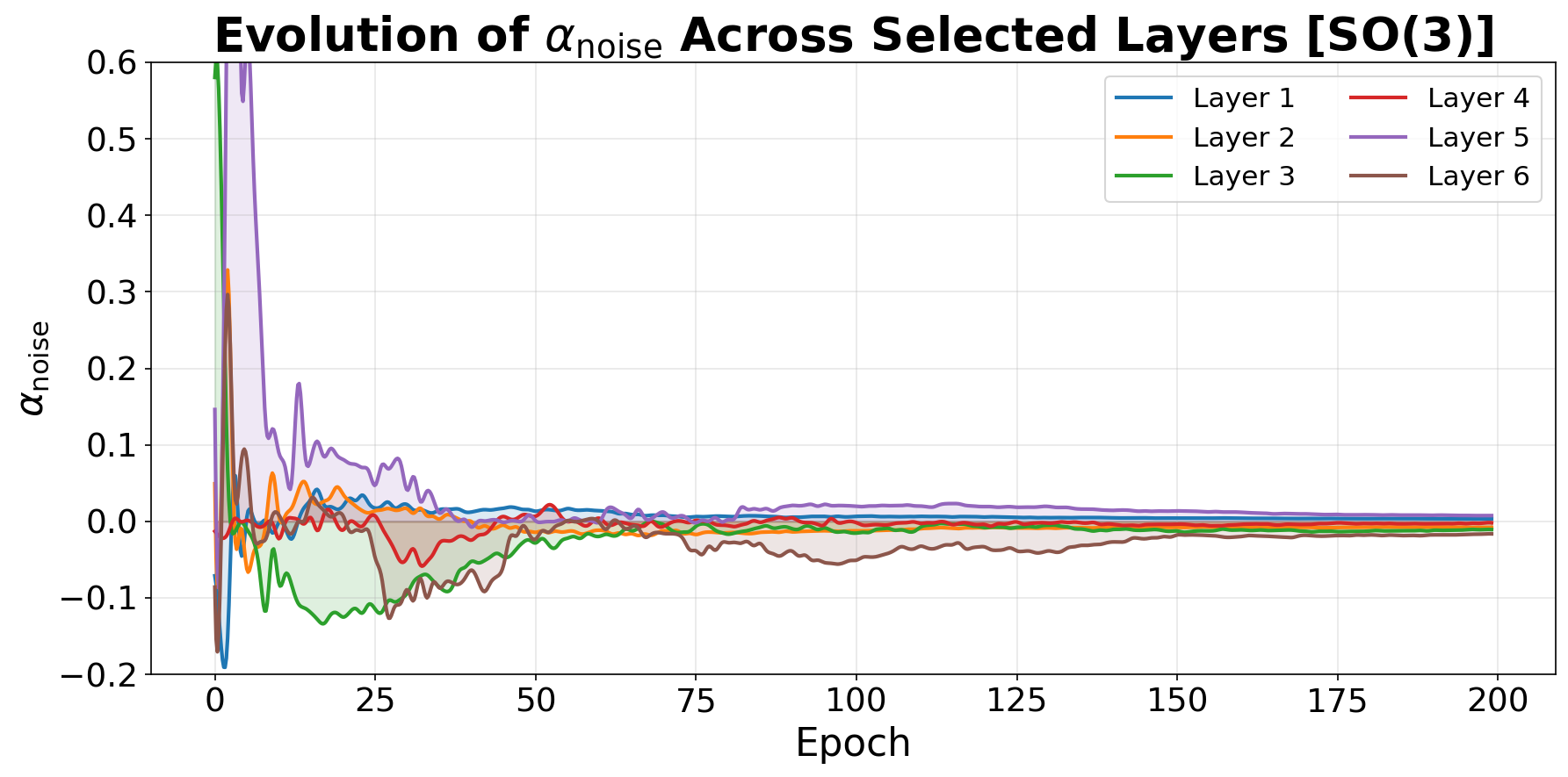} &
        \includegraphics[width=0.45\linewidth]{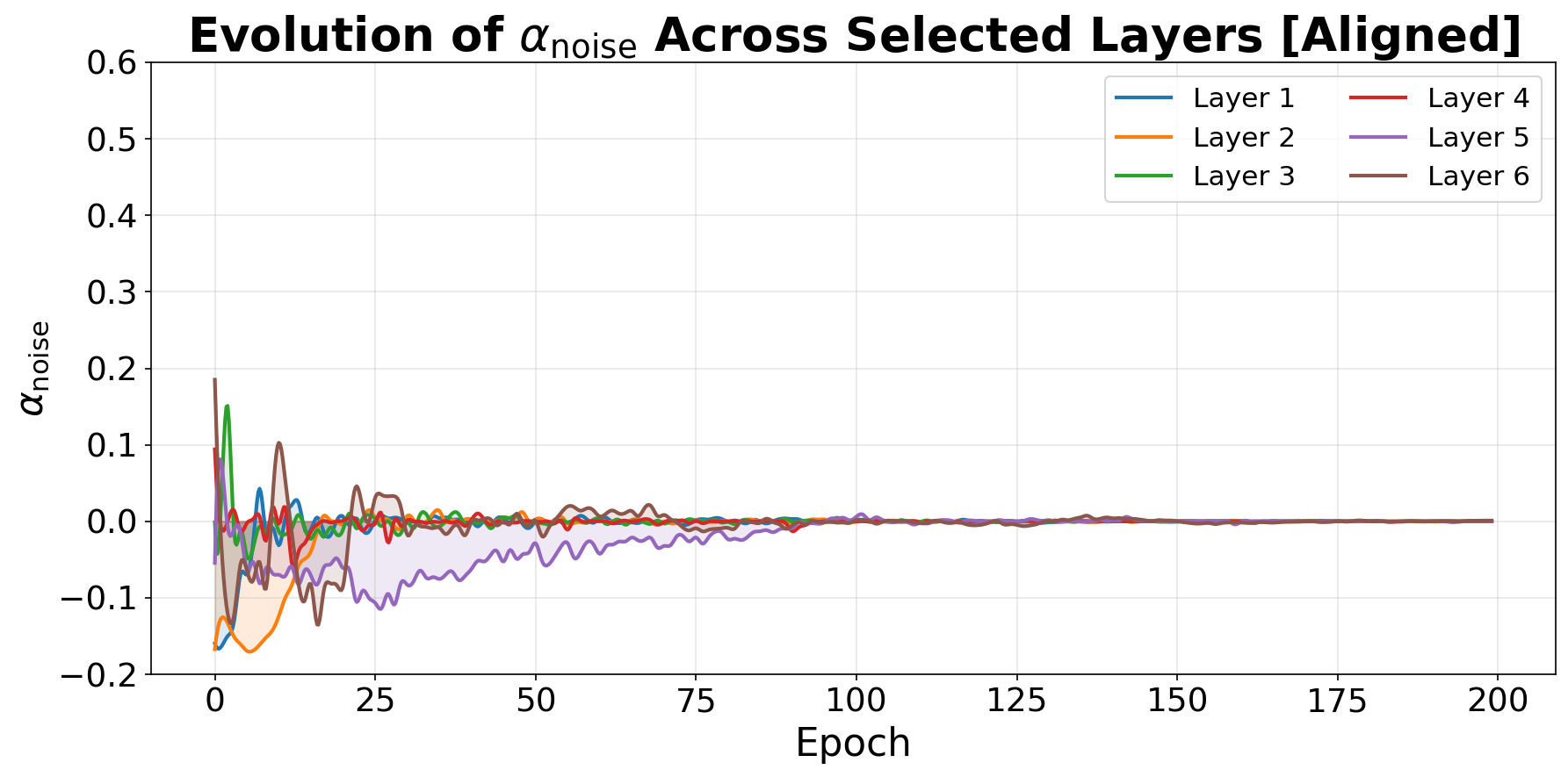}  \\
        \small (a) Fully Symmetric SO(3) Distribution & \small (b) Non-Symmetric Aligned Distribution
    \end{tabular}
    \caption{Evolution of relaxation modulation parameter $\alpha_b,\alpha_\text{noise}$ of RECM applied on ModelNet40 \cite{shapenet2015} classification of a fully $\mathrm{SO}(3)$ symmetric dataset (where the input shapes have random poses) and a non-symmetric aligned dataset (where the input shapes have pre-aligned poses)}
    \label{fig:curves_append}
\end{figure*}
\begin{proof}
    We refer to the classification of closed subgroups of $SO(3)$, which are either finite, or isomorphic to $SO(2)$ (in particular abelian), or equal to $SO(3)$. If the group generated by $R_1,R_2$ is not one of the finite groups of $SO(3)$ (cyclic, dihedral, tetrahedral, octahedral, icosahedral) and is not abelian (as is $SO(2)$) then its closure must be $SO(3)$ as desired.

    As to the examples given, we note that $R_1,R_2$ being rotations around distinct axes, they do not commute, covering the first requirement. Further, in the first example the generated group is not finite because one of the two rotations has infinite order; in the second example, we can verify directly that the two axes and angles of rotation of $R_1,R_2$ are not compatible with any of the finite subgroups of $SO(3)$.
\end{proof}
\subsection{Extension to multivariable state $h_t$}\label{sec:multvariable}
In Section \ref{sec:design} we provided results for the simplified case of scalar state $h_t$. In the more general case of multidimensional $h_t$, since we apply the update rule of Equation \ref{eq:update} ``pointwise" we can easily extend the provided result to consider a multidimensional state vector. Specifically consider $h_t\in \mathbb{R}^m$, where we can apply Theorem \ref{convergence:lemma} and Theorem \ref{upperbound:lemma} at each dimension of $h_t$ independently to show that the $i^\mathrm{th}$ element of $h_t$, which we denote as $h_{t,i}$, converges to $h^*_{i}$ with $|h^*_{i}|\leq 2BW_1(p,p_G)$.

Then given that $\lVert w_{\alpha_i}\rVert\leq 1$, we can compute the bound:
\begin{align*}
    |w^T_{\alpha_i}h^*|\leq \lVert h^*\rVert \leq 2\sqrt{m}BW_1(p,p_G)
\end{align*}
and given the nonlinearity $s$ implemented as a GeLU, with property $|s(x)|\leq |x|$, we have that overall $|s(w_{\alpha_i}h^*)|\leq 2\sqrt{m}BW_1(p,p_G)$. 
\section{Modulation parameters evolution}\label{sec:appendix_modulation_evo}
In addition to Figure \ref{fig:curves}, Figure \ref{fig:curves_append} shows the evolution of relaxation parameters $\alpha_{b,t},\alpha_{\text{noise},t}$ for the ablation study on point cloud classification presented in Section \ref{sec:Ablations}. We can observe that the modulation parameter $\alpha_{b}$  for the additional bias term has a similar behavior as $\alpha_w$, namely converging to zero for the case of the $\mathrm{SO}(3)$ symmetric distribution, while converging to non-zero values for the case of the aligned distribution. On the contrary, the additional noise modulation parameter $\alpha_\text{noise}$ converges to zero in both cases, since reducing the output variance in both types of distributions improves performance on the deterministic classification task.
\begin{figure}[t]
    \centering
    \includegraphics[width=0.8\linewidth]{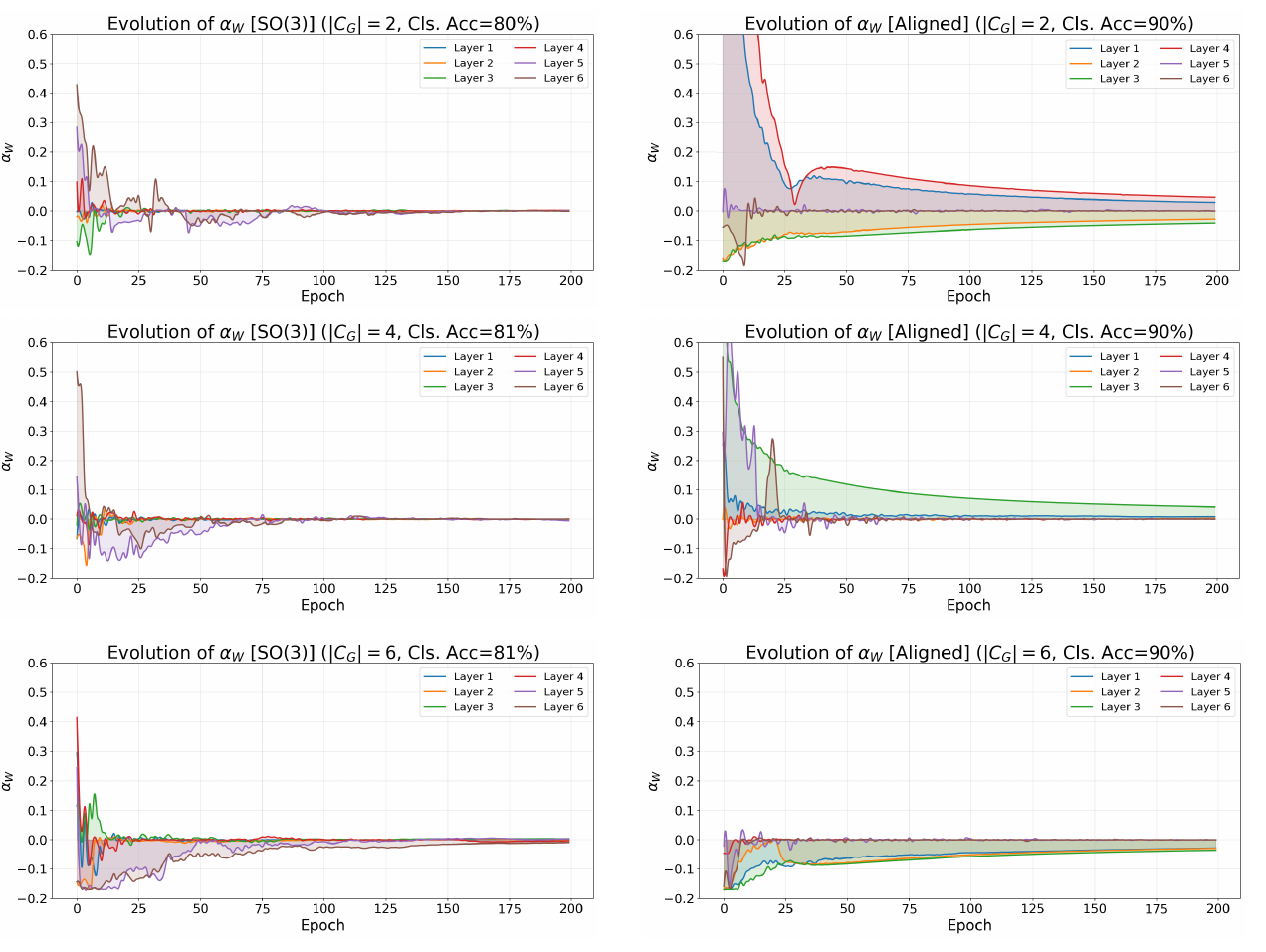}
    \caption{Evolution of relaxation modulation parameter $\alpha_W$ for different sizes of generating set $C_G$, when applying RECM on the VN-PointNet model trained on the ModelNet40 classification with a full symmetric SO(3) dataset and an aligned dataset.}
    \label{fig:cg_size}
\end{figure}
\section{Implementation Details}\label{sec:ImplementationDetails}
In this section, we provide the implementation details for the experimental evaluations presented in Section \ref{sec:experiments}. For all experiments, we implement the optimization state variable $h_t$ using a 16 dimensional vector and the learnable update function $r_\theta$ described in Equation \ref{ltheta} using a two-layer MLP with a hidden dimension equal to 16 and GeLU nonlinearities. During the experimental evaluation, we observed that an MLP of size 16 was sufficiently expressive to provide the convergence properties shown in Figures \ref{fig:curves}, \ref{fig:curves_append} . For the update rule of Equation \ref{eq:update} we set $b=1$, while $a$ is a task specific hyperparameter that depends on the total iterations each model is required for convergence. We tune $a$ by performing a simple  grid search for different powers of 10, ranging from $10^{-1}$  to $10^{-5}$. Additionally, since $\alpha_i$ will converge to exact zero only at infinite time steps, we approximate the exact convergence by clipping all $|\alpha_i|<\epsilon$ to be equal to zero, for some small  $\epsilon=0.01$. Finally, following the setup of \citet{manolache2025learning}, since the norm of the added non-equivariant terms can dominate the constraint modulation, growing in scale faster than the constraint is reduced, we set an upper bound on their norms, guaranteeing that small values of $\alpha_i$ correspond to small non-equivariant contributions. 

In all experiments as topological generating set of $\mathrm{SO}(3)$ we used:
\begin{align*}
    R_1=\left(\begin{array}{ccc}1&0&0\\0&\cos(3/2)&-\sin(3/2)\\0&\sin(3/2)&\cos(3/2)\end{array}\right),\quad     R_2=\left(\begin{array}{ccc}\cos(3/2)&-\sin(3/2)&0\\\sin(3/2)&\cos(3/2)&0\\0&0&1\end{array}\right)
\end{align*}
The choice of the above generating set of just two rotations allows us to limit the required forward passes through $r_\theta$. As shown in \ref{proof:exSO3}, any pair of rotations with distinct axes and irrational angles would suffice. The above theoretical result is also verified by Figure \ref{fig:cg_size}, which shows both the evolution of $\alpha_W$ and the final achieved accuracy when training on the rotated and aligned point cloud classification dataset with different sizes of generating set $|C_G|$. We observe that the evolution of $\alpha_W$, although different across training runs, exhibits similar convergence properties regardless of the size of the generating set used. Additionally the different size of $C_G$ doesn't affect the final accuracy achieved by the RECM models.

For the task specific hyperaparameters:
\begin{itemize}
    \item For the ablation studies on the point cloud classification, we follow the training setup used in \citet{PennPaper} and classify sparse point clouds (300 points) from the ModelNet40 dataset. In all ablations, we used $a=0.001$.
    \item For the N-body simulation problem, we used the training setup used in \citet{pmlr-v80-kipf18a}, by setting $a=0.001$.
    \item For the Motion Capture sequence prediction task, we follow the training and evaluation setup used in \citet{xu2024equivariant} with $a=10^{-4}$ for the run dataset and $a=10^{-5}$ for the walk dataset. 
    \item For the conformer generation we used the corresponding training setups described in \citet{hassan2024etflow} and \citet{frank2025sampling} with $a=10^{-4}$.
\end{itemize}
\subsection{Discussion on Parameter Overhead and Computational Overhead}\label{sec:overhead}
 As discussed in the previous section, terms with small $\alpha_i$ values can be pruned from the network at inference time. Table~\ref{tab:overhead} reports the percentage of additional parameters retained in the relaxed models relative to the equivariant baseline after pruning. We observe a clear distinction based on data symmetry: for datasets with symmetric distributions (ModelNet40 Rotated and N-body simulation), RECM recovers fully equivariant solutions, introducing no additional parameters at inference. In contrast, for tasks that can benefit from breaking exact equivariance (ModelNet40 Aligned, motion capture, and conformer generation), the network converges to partially non-equivariant solutions, retaining a portion of the relaxation parameters. RECM thus adaptively exploits symmetry breaking structure when present in the data.

Although pruning eliminates overhead at inference for symmetric tasks, the additional parameters still incur a cost during training. However, because the relaxation terms can be computed in parallel with the equivariant base layers, the total time overhead remains modest. While the total parameter count can increase by over 50\%, training time increases by only approximately 40\%. For challenging tasks where the optimal level of relaxation is unknown a priori, this modest additional training cost is typically preferable to exhaustively tuning per-layer relaxation levels across multiple training runs. 
\begin{table}[t]
    \centering
    \caption{Overhead ratios introduced by RECM relative to baseline models. \textit{Parameter overhead} reports the ratio of additional parameters retained at inference time (after pruning terms with small $\alpha_i$) relative to the baseline parameter count. \textit{Training time overhead} reports the ratio of increased training time relative to the baseline.}
    \label{tab:overhead}
    \begin{tabular}{llcc}
        \toprule
        Dataset & Baseline Model & \makecell{Inference Parameters \\ Overhead Ratio} & \makecell{Training Time \\ Overhead Ratio} \\
        \midrule
        ModelNet40 ``Rotated'' & VN-Pointnet & +0.00 & +0.42 \\
        ModelNet40 ``Aligned'' & VN-Pointnet & +1.25 & +0.42\\
        \midrule
        N-Body Simulation & SEGNN & +0.00 & +0.31 \\
        \midrule
        Motion Capture ``Run'' & EGNO & +0.61 & +0.45 \\
        Motion Capture ``Walk'' & EGNO & +0.39 & +0.45 \\
        \midrule
        Conformer Generation & ETFLOW & +0.80 & +0.40 \\
        Conformer Generation & DiT-MC & +0.90 & +0.38 \\
        \bottomrule
    \end{tabular}
\end{table}
\subsection{Conformer Generation Metrics}\label{sec:conf_metrics}
For the task of conformer generation, we evaluate the performance of our generations using the coverage precision and coverage recall metrics.

Given a set of generated conformers $\mathcal{G}$ and a set of reference conformers $\mathcal{R}$, we define the coverage precision as the fraction of conformers that match at least one reference conformer. A generation matches a reference conformer if the generated atom positions have root mean squared distance within a $\delta$ threshold, after we have performed optimal rotational and translational alignment using the Kabsch algorithm \cite{kabsch1976solution}. Similarly, as coverage recall we define the fraction of reference conformers that match at least one generation. For both metrics we use $\delta=0.75\text{\AA}$ as the threshold.


\end{document}